\DeclareMathAlphabet{\pazocal}{OMS}{zplm}{m}{n}
\newcommand{\dcq}[0]{\textbf{\textsf{\textsc{DCQ}}}\xspace}
\newtheorem{theorem}{Theorem}
\newtheorem{lemma}{Lemma}
\newcommand{\ballnumber}[1]{\tikz[baseline=(myanchor.base)] \node[circle,fill=.,inner sep=1pt] (myanchor) {\color{-.}\bfseries\footnotesize #1};}
\icmltitlerunning{DCQ: Divide and Conquer for Quantization}
\begin{document}

\twocolumn[
\icmltitle{Divide and Conquer: Leveraging Intermediate Feature Representations for \\ Quantized Training of Neural Networks}

% It is OKAY to include author information, even for blind
% submissions: the style file will automatically remove it for you
% unless you've provided the [accepted] option to the icml2020
% package.

% List of affiliations: The first argument should be a (short)
% identifier you will use later to specify author affiliations
% Academic affiliations should list Department, University, City, Region, Country
% Industry affiliations should list Company, City, Region, Country

% You can specify symbols, otherwise they are numbered in order.
% Ideally, you should not use this facility. Affiliations will be numbered
% in order of appearance and this is the preferred way.
\icmlsetsymbol{equal}{*}

\begin{icmlauthorlist}
\icmlauthor{Ahmed T. Elthakeb}{to}
\icmlauthor{Prannoy Pilligundla}{goo}
%\icmlauthor{Fatemehsadat Mireshghallah}{goo}
\icmlauthor{Alex  Cloninger}{ed}
\icmlauthor{Hadi Esmaeilzadeh}{goo} \\
\textbf{A}lternative \textbf{C}omputing \textbf{T}echnologies ({\color[HTML]{0B6121}{\textbf{ACT}}}) Lab\\
University of California San Diego  \\
{\sffamily\small {\{\href{mailto:a1yousse@eng.ucsd.edu}{a1yousse},\ \href{mailto:ppilligu@eng.ucsd.edu}{ppilligu}\}@eng.ucsd.edu} \quad \quad {\href{mailto:acloninger@ucsd.edu}{acloninger@ucsd.edu}} \quad \quad {\href{mailto:hadi@eng.ucsd.edu}{hadi}@eng.ucsd.edu}}
%{\sffamily\small {\{\href{mailto:a1yousse@eng.ucsd.edu}{a1yousse},\ \href{mailto:ppilligu@eng.ucsd.edu}{ppilligu}, 
%\href{mailto:hadi@eng.ucsd.edu}{hadi}\}@eng.ucsd.edu}} 
\end{icmlauthorlist}

\icmlaffiliation{to}{Department of Electrical and Computer Engineering, University of California San Diego.} 
\icmlaffiliation{goo}{Department of Computer Science, University of California San Diego.} 
\icmlaffiliation{ed}{Department of Mathematics, University of California San Diego.}

\icmlcorrespondingauthor{Ahmed T. Elthakeb}{a1yousse@eng.ucsd.edu}

% You may provide any keywords that you
% find helpful for describing your paper; these are used to populate
% the "keywords" metadata in the PDF but will not be shown in the document
\icmlkeywords{Machine Learning, ICML}

\vskip 0.3in
]

% this must go after the closing bracket ] following \twocolumn[ ...

% This command actually creates the footnote in the first column
% listing the affiliations and the copyright notice.
% The command takes one argument, which is text to display at the start of the footnote.
% The \icmlEqualContribution command is standard text for equal contribution.
% Remove it (just {}) if you do not need this facility.

\printAffiliationsAndNotice{}  % leave blank if no need to mention equal contribution
%\printAffiliationsAndNotice{\icmlEqualContribution} % otherwise use the standard text.

\begin{abstract}

The deep layers of modern neural networks extract a rather rich set of features as an input propagates through the network.
This paper sets out to harvest these rich intermediate representations for quantization with minimal accuracy loss while significantly reducing the memory footprint and compute intensity of the DNN.
This paper utilizes knowledge distillation through teacher-student paradigm~\cite{DBLP:journals/corr/HintonVD15} in a novel setting that exploits the feature extraction capability of DNNs for higher-accuracy quantization.
As such, our algorithm logically divides a pretrained full-precision DNN to multiple sections, each of which exposes intermediate features to train a team of students independently in the quantized domain.
This divide and conquer strategy, in fact, makes the training of each student section possible in isolation while all these independently trained sections are later stitched together to form the equivalent fully quantized network.
Our algorithm is a sectional approach towards knowledge distillation and is not treating the intermediate representation as a hint for pretraining before one knowledge distillation pass over the entire network~\cite{DBLP:journals/corr/RomeroBKCGB14}.
%
%Experiments on various DNNs (LeNet, ResNet-20, SVHN and VGG-11) show that, on average, this approach---called \textbf{DCQ} (\textbf{D}ivide and \textbf{C}onquer \textbf{Q}uantization)---achieves 7.6\% higher accuracy than the state-of-the-art quantization technique, DoReFa~\cite{Zhou2016DoReFaNetTL} for the same bitwidths.
%
%Experiments on various DNNs (LeNet, ResNet-20, SVHN and VGG-11, AlexNet, ResNet-18) show that, on average, this approach---called \textbf{DCQ} (\textbf{D}ivide and \textbf{C}onquer \textbf{Q}uantization)---achieves on average 9.7\% accuracy improvement to a state-of-the-art quantized training technique, DoReFa-Net~\cite{Zhou2016DoReFaNetTL} for binary and ternary quantization of the weights. 
Experiments on various DNNs (AlexNet, LeNet, MobileNet, ResNet-18, ResNet-20, SVHN and VGG-11) show that, this approach---called {\dcq} (\textbf{D}ivide and \textbf{C}onquer \textbf{Q}uantization)---on average, improves the performance of a state-of-the-art quantized training technique, DoReFa-Net~\cite{Zhou2016DoReFaNetTL} by 21.6\% and 9.3\% for binary and ternary quantization, respectively.
%
%Additionally, we show that our approach, \dcq, can improve performance of existing state-of-the art knowledge-distillation based approaches~\cite{Mishra2017WRPNWR} by 1.75\% on average.
Additionally, we show that incorporating \dcq to existing quantized training methods leads to improved accuracies as compared to previously reported by multiple state-of-the-art quantized training methods.

%
%The benefits grows to 17.1\% in the case of ResNet-20 with binary quantization of the weights.
%

\if 0
\newpage

each of which is then turned to an independent subnetwork to teach a corresponding student subnetwork.
These students are then stitched together to from the quantized version of the original whole DNN. 

 apprentence 

and then uses the inputs

Feature learning is the essence of deep learning.
As depth is a fundamental aspect for top-performing and highly expressive networks with richer levels of abstractions, it is also a primary factor in making the training process more challenging and the deployment more computationally expensive.
Knowledge distillation has first addressed this issue by introducing teacher-student paradigm to obtain small and fast-to-execute models, while quantization with low precision weights has a great impact on alleviating both the compute and memory footprint burdens.
We propose a new re-training mechanism for quantized Networks which aims to achieve the original full precision accuracy for the quantized network by focusing on retaining the knowledge of intermediate feature representation learnt by the original network.
In this paper, we propose a novel \textbf{\textit{divide-and-conquer}} based approach leveraging a new variant of knowledge distillation in the context of neural networks quantization. 
The procedure is as follows: (1) we \textbf{\textit{split}} a pretrained full precision network into multiple sections, each of which is trained in low precision in an isolated/separate training task, (2) new variant of knowledge distillation is incorporated to enable intermediate feature learning while training each section in the respective training task, (3) we \textbf{\textit{merge}} the post-training low precision sections to obtain a fully quantized network. 

\fi %0

\end{abstract}

\section{Introduction}
\label{sec:intro}
%
% background 
Today deep learning, with its superior performance, dominates a wide range of real life inference tasks including image recognition, voice assistants, and natural language processing~\cite{sirius,DBLP:conf/nips/KrizhevskySH12,DBLP:journals/nature/LeCunBH15,bpzip}. 
However, the shear complexity of deep learning models and the associated heavy compute and memory requirement appears as a major challenge as the demand for such services rapidly scale.
Quantization, which can reduce the complexity of each operation as well as the overall storage requirements of the DNN, has proven to be a promising path forward.
Nevertheless, quantization requires carefully tailored training and recovery algorithms~\cite{DBLP:conf/nips/CourbariauxBD15,DBLP:conf/icml/GuptaAGN15,DBLP:journals/jmlr/HubaraCSEB17,DBLP:conf/iclr/ZhouYGXC17,Zhou2016DoReFaNetTL} to even partially overcome its losses in accuracy.
\begin{figure}
%    \vspace{-0.5cm}
    \centering
    \includegraphics[width=0.5\textwidth]{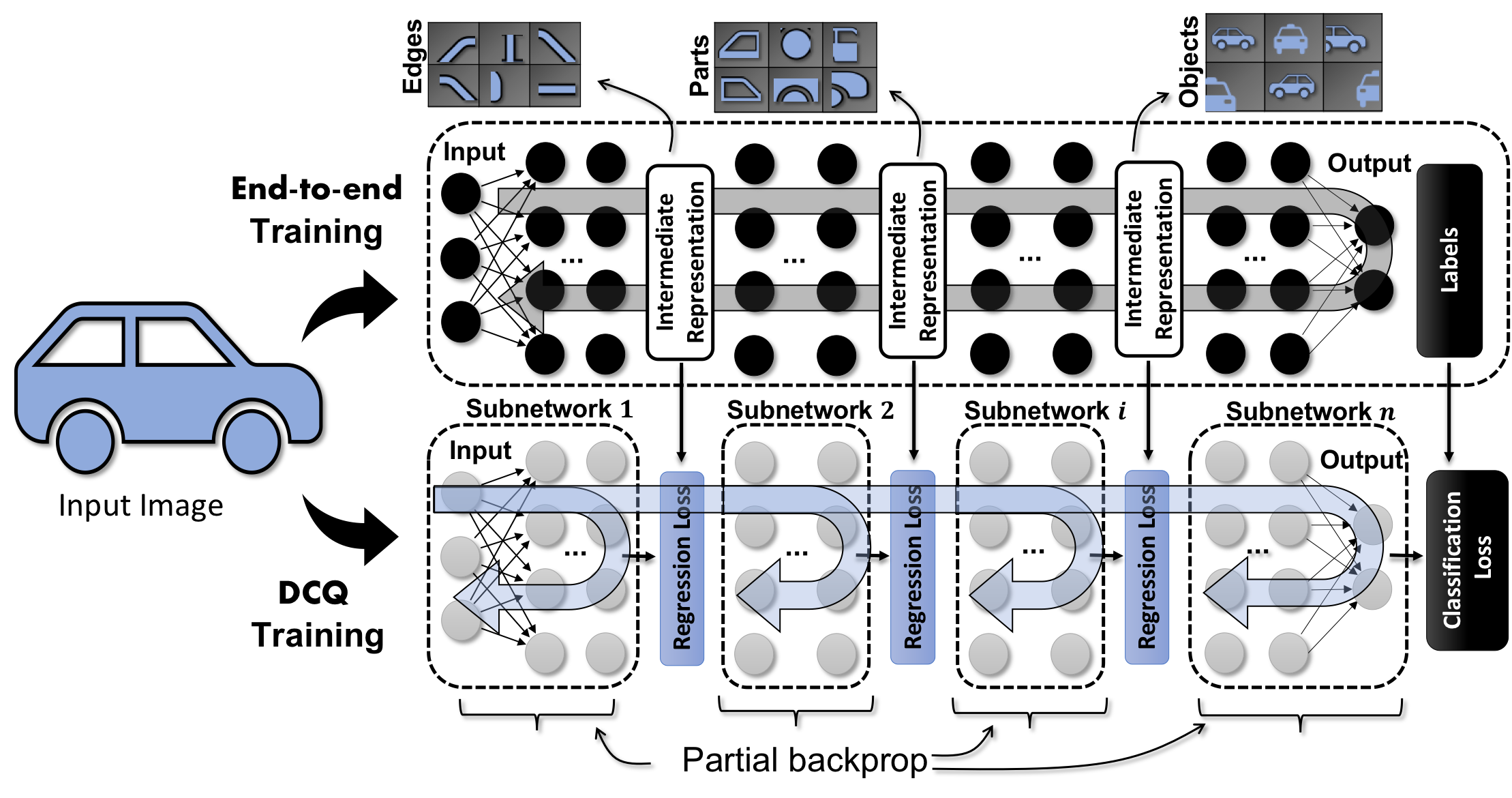}
    \vspace{-2ex}
%    \caption{Overview of Divide and Conquer Quantization (\dcq) where our sectional multi-backpropgation algorithm is used to teach a quantized network. (a) Pretrained full precision network (teacher), (b) quantized network (student)}		  
   \caption{Overview of Divide and Conquer Quantization.}		  
    \vspace{-2ex}
    \label{fig:overview}
\end{figure}
In this paper, we set out to devise an algorithm that enables quantization with much less accuracy degradation.
The key insight is that the intermediate layers of a deep network already extract a very rich set of features and these intermediate representations can be used to train/teach a quantized network more effectively.
To that end, we define a new approach towards knowledge distillation through teacher-student paradigm~\cite{DBLP:journals/corr/HintonVD15,DBLP:conf/kdd/BucilaCN06} focusing on teaching the knowledge of intermediate features to a corresponding quantized student.
Knowledge distillation~\cite{DBLP:journals/corr/HintonVD15} is a generic approach to reduce a large model down to a simpler or smaller distilled model. 
At a high level, a softened version of the final output is used to train a small model (student) to mimic the behavior of the original large model (teacher).
\textsc{FitNets}~\cite{DBLP:journals/corr/RomeroBKCGB14} extends this idea and takes hints from an intermediate layer of the teacher to pretrain the first few layers of the student and then apply knowledge distillation to the entire student network.
We, on the other hand, tap into the multiple intermediate layers and apply knowledge distillation through sectioning.
The sectioning enables \dcq to train each section of the student independently in isolation to deliver a quantized counterpart for the teacher.
In fact, the hints as proposed in \textsc{FitNets} are complementary and can potentially be used in our sectional  knowledge distillation.
The proposed algorithm, \dcq, employs a divide and conquer approach that divides a pretrained full-precision network into multiple sections, each of which exposes a set of intermediate features. 
As Figure~\ref{fig:overview} illustrates, \dcq allocates a student section to each teacher counterpart and independently trains them using the intermediate feature representations.
\dcq calculates the loss of each student section by comparing it with the output activations of the corresponding teacher section of the full precision network.
Loss is optimized through a sectional multi-backpropagation scheme using conventional gradient-based training as shown in Figure \ref{fig:overview}.
These trained student sections are then sewed back together to form the corresponding quantized DNN.

We validate our method, \dcq, through experiments on a variety of DNNs including AlexNet, LeNet, MobileNet, ResNet-18, ResNet-20, SVHN and VGG-11 with binary and ternary weights.
%
%Across all the quantization levels, DCQ, on average, delivers 7.7\% higher accuracy than DoReFa~\cite{Zhou2016DoReFaNetTL}, the state-of-the-art quantization algorithm.
%
%For VGG-11, the benefits are as high as 14.7\% and reach 17.1\% for ResNet-20 with binary weights in both cases.
%
%Results show that \dcq, on average, closes the accuracy gap between a state-of-the-art quantized training technique, DoReFa-Net~\cite{Zhou2016DoReFaNetTL} and the full-precision runs by 85\% and 92\% for binary and ternary quantization of the weights, respectively.
%
Results show that \dcq, on average, improves the performance of a state-of-the-art quantized training technique, DoReFa-Net~\cite{Zhou2016DoReFaNetTL} by 21.6\% and 9.3\% for binary and ternary quantization, respectively, which further helps in closing the accuracy gap between state-of-the-art quantized training techniques and the full-precision runs.
Additionally, we show that our approach, \dcq, can improve performance of existing knowledge-distillation based approaches~\cite{Mishra2017WRPNWR} and multiple state-of-the-art quantized training methods.
These encouraging results suggest that leveraging the inherent feature extraction ability of DNNs for knowledge distillation can lead to significant improvement in their efficiency, reducing their bitwidth in this particular case.

The contributions of this paper can be summarized as follows.
\begin{itemize}[noitemsep]
  \vspace{-0.3cm}
%  \item Developing a new method for training quantized neural networks.
  \item \textbf{Extending knowledge distillation.} \dcq enables leveraging \textit{arbitrary number of intermediate layers} relying on the inherent hierarchical learning characteristic of deep neural networks in contrast to only the output layer or hint layer. As such, distillation learning and hint learning fall as special cases of the proposed \textit{divide and conquer strategy}.

%  \item Introducing new variant in the context of teacher-student paradigm; quantized (low precision) student network.
  \item \textbf{Enabling parallelization towards training quantized networks.} \dcq applies knowledge distillation through sectioning. As such it trains each section of the students independently in isolation to deliver a quantized counterpart for the teacher, which can occur in parallel. 
%  \item Applying divide and conquer approach to back-propagation for training neural networks which: (1) offers \textbf{speedup} in terms of optimization convergence, by back-propagating across a smaller section of the network (i.e. reduced/simplified optimization problem,	 and (2) opens the possibility of \textbf{parallelizing} the training process over a distributed system.
  \item \textbf{Complementary to other methods.} \dcq is a complementary method as it acts as an auxiliary approach to boost performance of existing training techniques by applying whatever the underlying training technique but in a stage-wise fashion with defining a regression loss per stage. 
  \item \textbf{Theoretical analysis.} We provide a theoretical analysis/guarantee of the error upper bound across the network through a chaining argument. 
\end{itemize}

\if 0
\newpage

%
%The proposed approach provides a low-precision (quantized) student model without compromising the expressive capabilities of the original full precision network (teacher) by preserving its depth.
%
The procedure is as follows: (1) we \textbf{\textit{split}} a pretrained full precision network sections, each of which is trained in low precision in an isolated/separate training task, (2) new variant of knowledge distillation is incorporated to enable intermediate feature learning while training each subnetwork in the respective training task, (3) we \textbf{\textit{merge}} the post-training low precision subnetworks to obtain a low precision (fully quantized) network. 

In this paper, we propose a novel \textbf{\textit{divide-and-conquer}} based approach leveraging a new variant of knowledge distillation in the context of neural networks quantization. 
First, we apply a divide and conquer approach where we divide a pretrained network into sections each of which exhibits a set of intermediate features. 
%
%Then, the output of each stage is passed through a softmax layer to generate a valid output probability distribution that is representative of the stage's intermediate features. KL divergence is then applied on the respective output probability distribution of each stage to generate a loss function. 
Then we calculate the loss of each section by comparing it with the output activations of the relevant section of the full precision network.
Loss is optimized through a sectional back-propagation within conventional gradient-based training as shown in Figure \ref{fig:overview}.
%
%Then, we apply a partial back-propagation across each section considering a custom loss function based on the corresponding intermediate features of the respective section as shown in Figure \ref{fig:overview}.
%
The proposed approach provides a low-precision (quantized) student model without compromising the expressive capabilities of the original full precision network (teacher) by preserving its depth.
The procedure is as follows: (1) we \textbf{\textit{split}} a pretrained full precision network sections, each of which is trained in low precision in an isolated/separate training task, (2) new variant of knowledge distillation is incorporated to enable intermediate feature learning while training each subnetwork in the respective training task, (3) we \textbf{\textit{merge}} the post-training low precision subnetworks to obtain a low precision (fully quantized) network. 
%
%\vspace{0.2cm} \\
%The proposed approach exhibits two major characteristics:  first, maintaining the powerful representation capabilities of deep models by preserving the original network topology, second, leveraging the fast-convergence and easy-to-optimize formulations of shallow networks by  training constituants subnetworks.
%
%The novelty of the proposed method can be summarized in terms of two aspects: approach and application. Figure \ref{fig:overview} shows an overview of the entire method. 
%
%First, in terms of approach, we introduce a new technique based on knowledge distillation to transfer intermediate features between teacher and student networks. Given a full precision pretrained model (teacher), the network is divided into stages each of which is a set of layers. The output of each stage is passed through a softmax layer to generate a valid output probability distribution that is representative of the stage's intermediate features. KL divergence is then applied on the respective output probability distribution of each stage to generate a loss function. Optimization is carried out through a stage-wise back-propagation within conventional gradient-based training. Second, in terms of application, we ... 

%
The contributions of this paper can be summarized as follows:
\begin{itemize}[noitemsep]
  \item Developing a new method for training quantized neural networks.
  \item Introducing new variant in the context of teacher-student paradigm; quantized (low precision) student network.
  \item Applying divide and conquer approach to back-propagation for training neural networks which: (1) offers \textbf{speedup} in terms of optimization convergence, by back-propagating across a smaller section of the network (i.e. reduced/simplified optimization problem, and (2) opens the possibility of \textbf{parallelizing} the training process over a distributed system.
\end{itemize}

Many attempts have been recently emerged to reduce the underlying complexity of neural networks. Such attempts can be summarized under the following two main categories. 

%\begin{comment}
%\end{comment}
%
\niparagraph{(1) Techniques that preserve the original network architecture and only considers model parameters}.
% model compression 
%One approach for obtaining small networks is shrinking, factorizing or compressing pretrained networks.
\textit{\underline{a) Model compression}} via pruning has been a common approach to reduce storage and computation costs of dense networks~\cite{DBLP:conf/nips/CunDS89, DBLP:conf/nips/HassibiS92,DBLP:conf/iclr/0022KDSG17}.
\cite{Han2015DeepCC} report significant compression ratios by pruning weights with small magnitudes and then retraining without hurting the overall accuracy. 
However, such pruning does not necessarily reduce the computation time since most of the pruned weights are from the fully-connected layers where computations are already cheap.
Other works~\cite{DBLP:conf/iclr/0022KDSG17,DBLP:journals/corr/MolchanovTKAK16} considered convolutional layers by pruning filters based on their magnitudes or significance to the
loss, and then retrain to recover accuracy. 
Despite their effectiveness, most of these techniques are based on iterative process which is expensive in most cases.  
% quantization  
\textit{\underline{b) Quantization}} is another approach that aims at reducing the model complexity using low-precision numerics.
Several binerization~\cite{Hubara2017QNN,Rastegari2016XNORNetIC, NIPS2016BNN} and ternerization~\cite{Li2016TernaryWN,Zhu2016TrainedTQ} methods have been emerged to quantize weights into \{-1,1\} and \{-1,0,1\} values respectively.
Albeit alluring, quantization can lead to significant accuracy loss if not employed with diligence~\cite{DBLP:conf/nips/LiD0SSG17}.

\niparagraph{(2) Techniques on the architecture level where efficient designs are incorporated to yield more compact models to begin with.}
% compact network architectures 
\textit{\underline{a) Compact models.}} GoogleNet~\cite{DBLP:conf/cvpr/SzegedyLJSRAEVR15} proposed to replace the fully connected layers with average pooling layers which significantly reduces the number of parameters.
SqueezeNet~\cite{DBLP:journals/corr/IandolaMAHDK16} replaces $3\times3$ convolution filters with $1\times1$ convolution filters which creates a compact neural network that cabody/evaluation.texn achieve  $\sim50\times$ reduction in the number of parameters while obtaining high accuracy.
MobileNet~\cite{DBLP:journals/corr/HowardZCKWWAA17} utilizes depthwise separable convolutions to compress the model. 
ShuffleNet~\cite{DBLP:conf/cvpr/ZhangZLS18} utilizes pointwise group convolution and channel shuffle to reduce the computation cost.
% knowledge distillation 
\textit{\underline{b) Knowledge distillation.}} \cite{DBLP:conf/kdd/BucilaCN06} have shown that it is possible to compress the knowledge in an ensemble into a single model which is much easier to deploy. 
\cite{DBLP:journals/corr/HintonVD15} has developed these efforts further by developing a generic approach \textit{knowledge distillation} to reduce a large model down to a smaller distilled model. At a high level, a softened version of the final output is used to train a small model (student) to mimic the behavior of the original large model (teacher).

All of these techniques require going through a training/retraining phase to minimize accuracy loss as a result of such significant model compression; however, training deep architectures is a difficult problem.
As neural networks evolve and become deeper and deeper, the gradient computation with millions of instances leads to a lengthy training process, and harder-to-solve optimization problem, even with modern GPU/TPU hardware acceleration, with no guarantees of accuracy recovery.
Stemming from the fact that deep learning methods attempt to learn feature hierarchies~\cite{DBLP:conf/icml/BengioLCW09}. Given a pretained model, it is conceivable that one can first learn the low level features (represented in the very first layers), then gradually move to more abstract (higher level) features (represented in the subsequent layers respectively), and so on.

%Makes the training more efficient in terms of complexity and accelerate the convergence time and solution quality offers a promise for faster training of deep neural nets 

%
In this paper, we propose a novel \textbf{\textit{divide-and-conquer}} based approach leveraging a new variant of knowledge distillation in the context of neural networks quantization. 
First, we apply a divide and conquer approach where we divide a pretrained network into sections each of which exhibits a set of intermediate features. 
%
%Then, the output of each stage is passed through a softmax layer to generate a valid output probability distribution that is representative of the stage's intermediate features. KL divergence is then applied on the respective output probability distribution of each stage to generate a loss function. 
Then we calculate the loss of each section by comparing it with the output activations of the relevant section of the full precision network.
Loss is optimized through a section-wise back-propagation within conventional gradient-based training as shown in Figure \ref{fig:overview}.
%
%Then, we apply a partial back-propagation across each section considering a custom loss function based on the corresponding intermediate features of the respective section as shown in Figure \ref{fig:overview}.
%
The proposed approach provides a low-precision (quantized) student model without compromising the expressive capabilities of the original full precision network (teacher) by preserving its depth.
The procedure is as follows: (1) we \textbf{\textit{split}} a pretrained full precision network sections, each of which is trained in low precision in an isolated/separate training task, (2) new variant of knowledge distillation is incorporated to enable intermediate feature learning while training each subnetwork in the respective training task, (3) we \textbf{\textit{merge}} the post-training low precision subnetworks to obtain a low precision (fully quantized) network. 
%
%\vspace{0.2cm} \\
%The proposed approach exhibits two major characteristics:  first, maintaining the powerful representation capabilities of deep models by preserving the original network topology, second, leveraging the fast-convergence and easy-to-optimize formulations of shallow networks by  training constituants subnetworks.
%
%The novelty of the proposed method can be summarized in terms of two aspects: approach and application. Figure \ref{fig:overview} shows an overview of the entire method. 
%
%First, in terms of approach, we introduce a new technique based on knowledge distillation to transfer intermediate features between teacher and student networks. Given a full precision pretrained model (teacher), the network is divided into stages each of which is a set of layers. The output of each stage is passed through a softmax layer to generate a valid output probability distribution that is representative of the stage's intermediate features. KL divergence is then applied on the respective output probability distribution of each stage to generate a loss function. Optimization is carried out through a stage-wise back-propagation within conventional gradient-based training. Second, in terms of application, we ... 

%
The contributions of this paper can be summarized as follows:
\begin{itemize}[noitemsep]
  \item Developing a new method for training quantized neural networks.
  \item Introducing new variant in the context of teacher-student paradigm; quantized (low precision) student network.
  \item Applying divide and conquer approach to back-propagation for training neural networks which: (1) offers \textbf{speedup} in terms of optimization convergence, by back-propagating across a smaller section of the network (i.e. reduced/simplified optimization problem,	 and (2) opens the possibility of \textbf{parallelizing} the training process over a distributed system.
\end{itemize}

\fi %0
\section{DCQ: Divide and Conquer for Quantization}

\SetKwComment{Comment}{$\triangleright$\ }{}
\niparagraph{Overview.}
We take inspiration from knowledge distillation and apply it to the context of quantization by proposing a novel technique dubbed \dcq.
The main intuition behind \dcq is that a deeply quantized network can achieve accuracies similar to full precision networks if intermediate layers of the quantized network can retain the intermediate feature representations that was learnt by the full precision network.
To this end, \dcq splits the quantized network and full precision network into multiple small sections and trains each section individually by means of partial backpropagation so that every section of the quantized network learns and represents similar features as the corresponding section in the full precision network.
In other words, DCQ divides the original classification problem into multiple regression problems by matching the intermediate feature (activation) maps.
The following points summarizes the practical significance and contribution of \dcq.

\niparagraph{Weight and activation quantization.} The proposed technique is orthogonal to the quantity of interest for quantization, as it's basically applying whatever the underlying/used training technique but in a stage-wise fashion with defining a new regression loss per stage. In fact, the regression loss is defined to match the respective activation maps for each stage. As such, DCQ can be equally applied for weight and/or activation quantization alike. Section~\ref{sec:evaluation} presents results for both weight and activation quantization.

\niparagraph{Integration to other methods.} The proposed technique is a complementary method as it acts as an auxiliary approach to boost performance of existing training techniques by applying whatever the underlying/used training technique but in a stage-wise fashion with defining a new regression loss per stage.
%On another side, it extends the traditional concept of knowledge distillation to quantization \textit{without any change in network architecture}, in contrast to previous works such as Apprentice~\cite{apprentice:2017}.

%
\niparagraph{Knowledge distillation utilization.} \dcq extends the concept of knowledge distillation to its limits by leveraging multiple intermediate layers as opposed to limiting it to the output layer only as in~\cite{apprentice:2017}, ~\cite{DBLP:journals/corr/HintonVD15} or the output layer and hint layer as in~\cite{DBLP:journals/corr/RomeroBKCGB14}.

\niparagraph{Other performance benefits.} \dcq enables per-network training "parallelization" by enabling training different sections/stages in isolation (stage-wise fashion). Moreover, it applies the standard back propagation in a simpler settings (small subnetworks) which enables both faster convergence time and higher accuracy than existing conventional fine-tuning methods in the quantized domain.

This section describes different steps and rationale of our technique in more detail.
\subsection{Matching activations for intermediate layers}\label{sec:matching_activations}
\begin{figure}
    \centering
    \includegraphics[width=0.4\textwidth]{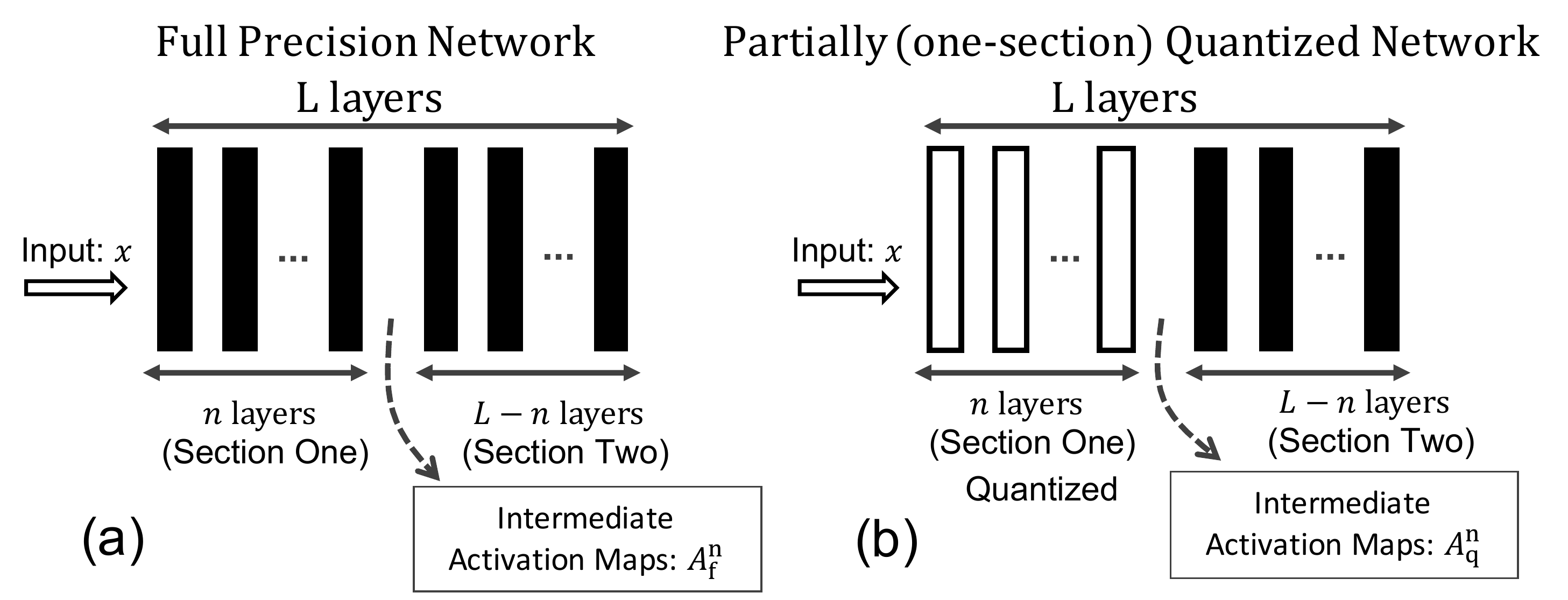}
        \vspace{-0.3cm}
    \caption{DCQ two stage split example}		  
    \vspace{-0.5cm}
    \label{fig:example}
\end{figure}
Figure~\ref{fig:example} (a) shows a sketch representing a full precision network of $L$ layers, whereas Figure~\ref{fig:example} (b) is a deeply quantized version of the same network where first $n$ layers are quantized and the remaining $L-n$ layers are at full precision.
When we pass the same input image $x$ to both these networks, if the output activations of layer $n$ for full precision network, i.e., $A_f^n$, are equivalent to the output activations of layer $n$ for the semi-quantized network, $A_q^n$, then both the networks classify the input to a same class because rest of the $L-n$ layers are same for both the networks and their input activations are same as well.
Therefore, if both these networks shown in Figure~\ref{fig:example} (a) and (b), have similar output activations for all the input images, then the network with first $n$ layers quantized has learnt to represent similar features as the first $n$ layers of the original network and it will have the same classification accuracy as the full precision network.
We can extend this argument further and say that if we quantize the remaining $L-n$ layers of the network in Figure~\ref{fig:example} (b) while keeping it's output same as the corresponding $L-n$ layers of the full precision network, then we now have a deeply quantized network with the same accuracy as the full precision network.
This is the underlying principle for our proposed quantization technique \dcq.
In the above example, the network was split into two sections of $n$ and $L-n$ layers, instead \dcq splits the original network into multiple sections and trains those sections individually to output same activations as the corresponding section in the full precision network.
Following subsections explain the \dcq methodology in more detail.
%To summarize, in the algorithm delineated above, we are optimizing our model to mimic the intermediate representations of the full precision pre-trained model., not to perform better with respect to the original loss., which should be both easier and faster to accomplish as will be shown in the evaluation section.
%
%Next section \ref{sec:split_train}

\subsection{Splitting, training and merging}\label{sec:split_train}
\begin{figure}
    \centering
    \includegraphics[width=0.5\textwidth]{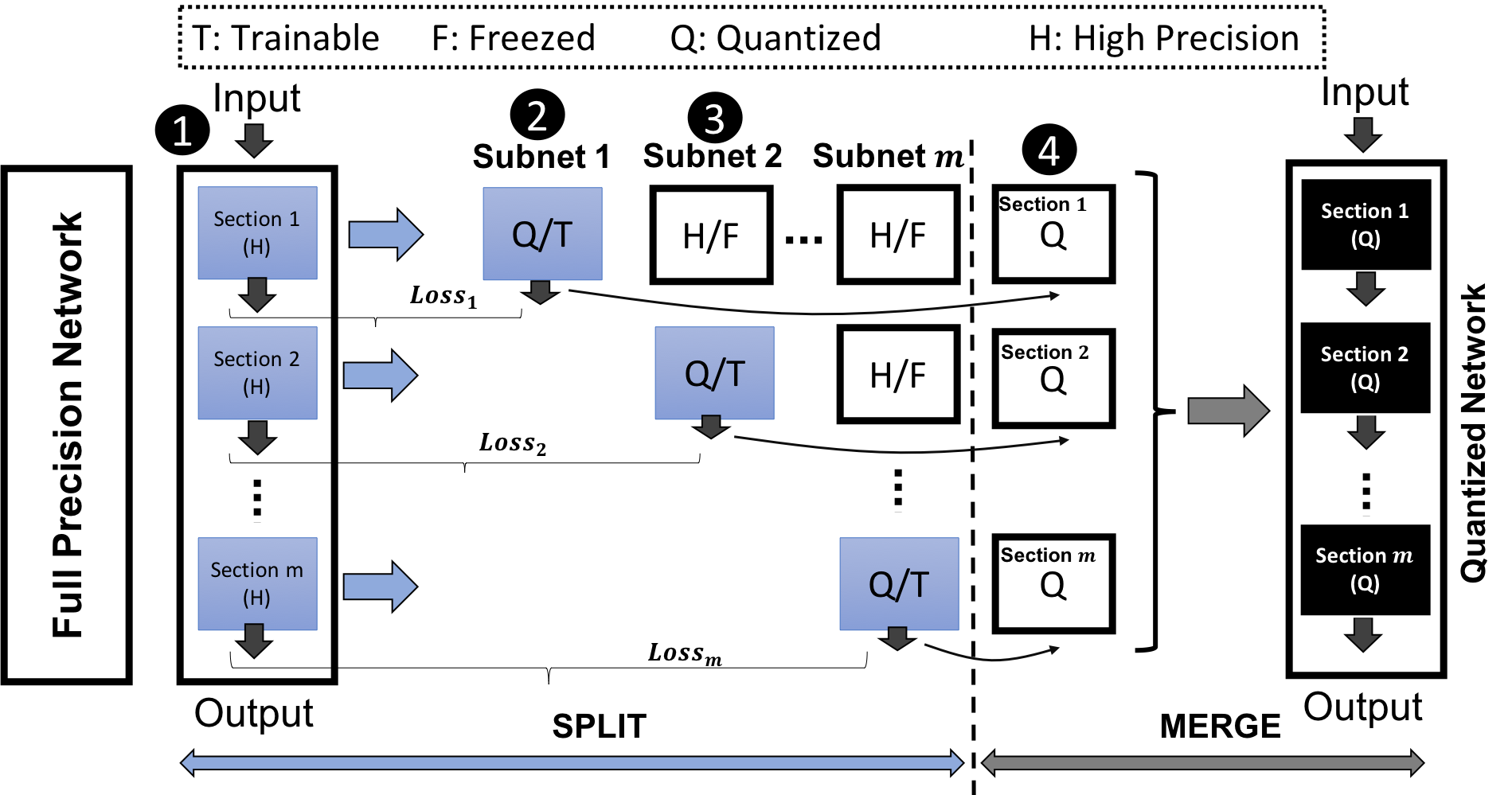}
    \caption{Divide and Conquer approach overview showing SPLIT phase; dividing the teacher full precision network into smaller subnetworks, and MERGE; by combining the training results of each subnetwork to form a fully quantized network}		  
    \vspace{-4ex}
    \label{fig:DCQ}
\end{figure}

\niparagraph{Splitting the full precision network.}
As described in Section~\ref{sec:matching_activations}, \dcq splits the original network into multiple sections and trains them in isolation and in parallel.
Figure~\ref{fig:DCQ} shows an overview of the entire process.
As shown in the figure, after splitting the full precision network into $m$ sub sections, \dcq quantizes and trains these subsections independently.
After training, \dcq puts them all together again to get the deeply quantized version of the entire original network.
As discussed in Section~\ref{sec:matching_activations}, because each of these sections is trained to capture the same features as the full precision network, although these sections are trained independently, they can be put together at the end to give similar accuracy as the full precision network.

If the original network has $L$ layers then $m$ decides how many layers will be part of each section (sections need not be equal in terms of number of layers).
In this work, we used a configuration of two layers per every section and then decided $m$ according to the total number of layers in the network.
Although, for networks like ResNet which have logical splits in terms of basic blocks, we split the network in a way that each section corresponds to a basic block.
We leave the task of deciding the optimal number of sections (splits) and how many layers per section for a given network to future work.
However, we provide some empirical analysis to this regard in Section~\ref{sec:exploratory_studies}.

\niparagraph{Training the sub-networks.}
As Figure~\ref{fig:DCQ} illustrates, \ballnumber{1} we create $m$ sections in order to train each of the $m$ sub-networks.
For each section $i$, the sub-network $i$ (or subnet $i$ for short) consists of all the sections preceding it.
Subnet 1 column in Figure~\ref{fig:DCQ} \ballnumber{2} shows a subnet for section $1$. 
To train this section, the output activations of the quantized version of section $1$ are compared with the output activations of the full precision version of section $1$ and the loss is calculated accordingly.
Section~\ref{sec:loss_func} gives more details on how the loss is calculated for each subnet.
Similarly, Subnet 2 column \ballnumber{3} shows the subnet for section $2$ and it comprises of both section $1$ and section $2$.
Output activations of section $2$ are used to calculate the loss in this case.
Since section $2$ is being trained in this subnet, weights for section $1$ are frozen(not trainable) in this subnet and backpropagation based on the loss only affects section $2$.
Similarly there are subnets for sections $3$ up to last section $m$ and the last subnet $m$ is basically similar to the full precision network except that the section $m$ is quantized and all the other sections from $1$ to $m-1$ are frozen.

\niparagraph{Merging the sections.}
\ballnumber{4} After training all the sections, since each of these sections has been trained independently to learn the same features as the corresponding section of the full precision network but with quantized weights, they can be put together to form a fully trained quantized network.
In every subnet, freezing all the sections except the one being trained is the key in enabling merging of all the individual sections at the end.

\subsection{Loss function for training sub networks}\label{sec:loss_func}
All machine learning algorithms rely on minimizing a loss function to achieve a certain objective.
The parameters of the network are trained by back-propagating the derivative of the loss with respect to the parameters throughout the network, and updating the parameters via stochastic gradient descent.
Broadly speaking, according to the particular task, loss functions can be categorized into two types: Classification Loss and Regression Loss.
Fundamentally, classification is about predicting a label (discrete valued output) and regression is about predicting a quantity (continuous valued output).
Since \dcq aims to capture the intermediate features learnt by the full precision network, loss needs to be calculated based on the output activations of intermediate layers unlike the traditional loss which is calculated using the output of the final classification layer and the targets.
As such, and in the context of this paper focusing on classification tasks, \dcq divides the original classification problem into multiple \textit{regression} problems by matching the intermediate feature (activation) maps.
%
%For classification tasks, cross-entropy loss measures the performance of a classification model whose output is a probability value between 0 and 1. 
%
%Cross-entropy loss increases as the predicted probability diverges from the actual label.
%Cross-entropy loss increases as the predicted probability diverges from the actual label.
%
%For regression tasks, different loss functions can be used.
In this study, we have examined three of the most commonly used regression loss formulations. 
Namely: \\
(1) Mean Square Error (MSE): $\mathcal{L}=\frac{1}{n}\sum_{i=1}^{n}(y^{(i)}-\hat{y}^{(i)})^2$, 
(2) Mean Absolute Error (MAE): $\mathcal{L}=\frac{1}{n}\sum_{i=1}^{n}\lvert y^{(i)}-\hat{y}^{(i)} \rvert$, 
and (3) Huber Loss: $$\mathcal{L}= \frac{1}{n}\sum_{i=1}^{n} \begin{cases} 
      \frac{1}{2}(y^{(i)}-\hat{y}^{(i)})^2 & , \ \lvert y^{(i)}-\hat{y}^{(i)} \rvert \leq \delta \\
      \delta (y^{(i)}-\hat{y}^{(i)}) - \frac{1}{2}\delta & , \ otherwise
   \end{cases}$$
\if 0
Namely: \\
(1) Mean Square Error (MSE):
\[ \mathcal{L}=\frac{1}{n}\sum_{i=1}^{n}(y^{(i)}-\hat{y}^{(i)})^2 \]
(2) Mean Absolute Error (MAE):
\[ \mathcal{L}=\frac{1}{n}\sum_{i=1}^{n}\lvert y^{(i)}-\hat{y}^{(i)} \rvert \]
(3) Huber Loss:
\[ \mathcal{L}= \frac{1}{n}\sum_{i=1}^{n} \begin{cases} 
      \frac{1}{2}(y^{(i)}-\hat{y}^{(i)})^2 & , \ \lvert y^{(i)}-\hat{y}^{(i)} \rvert \leq \delta \\
      \delta (y^{(i)}-\hat{y}^{(i)}) - \frac{1}{2}\delta & , \ otherwise
   \end{cases}
\]
\fi 
where $y$ is the target value, and $\hat{y}$ is the predicted value, and the summation is across all samples.
For Huber loss, $\delta$ (delta) is a hyperparameter which can be tuned. Huber loss approaches MAE when $\delta \sim 0$ and MSE when $\delta \sim \infty$ (large numbers).
Section~\ref{sec:exploratory_studies} provides experimental results for each of the above loss formulations.
%
%In \dcq, we use the Poisson loss function as it is a measure of how the predicted distribution diverges from the expected distribution.
%
%In our case, expected distribution is the output activations of the full precision network whereas the predicted distribution is the output activations of a quantized sub network.
%
%Poisson loss is expressed by the following formula where $\hat{y}$ are the predicted activations, $y$ are the output activations and $n$ is the total number of output activations for that sub network.
%\vspace{-2ex}
%\[ \mathcal{L}=\frac{1}{n}\sum_{i=1}^{n}(\hat{y}^{(i)}-y^{(i)}.\log(\hat{y}^{(i)})) \]

%However, we have explored multiple other loss functions as well and Section~\ref{sec:exploratory_studies} provides experimental results for different loss formulations.

%We create $m$ instances can be trained in parallel by creating $m$ different instances of the original network where all the sub sections except the one being trained are frozen.

\subsection{Overall Algorithm}\label{sec:algorithm}
Algorithm~\ref{alg:overvall_technique} outlines the step by step procedure for \dcq putting together all the steps described in Sections~\ref{sec:split_train} and \ref{sec:loss_func}.
Since each iteration of the loop, shown in the algorithm, is independent, all the sections can potentially be trained in parallel leading to an overall reduction in training time.
\begin{algorithm}
\scriptsize
\caption{\scriptsize{Divide and Conquer for Quantization: Training Procedure}}
%\hspace*{\algorithmicindent} \textbf{Input:} Pretrained Full Precision Neural Network  $(N^{FP})$  \\
%\hspace*{\algorithmicindent} \textbf{Output:} Quantized Neural Network $(N^Q)$ 
 \textbf{Input:} Pretrained Full Precision Neural Network  $(N^{FP})$  \\
 \textbf{Output:} Quantized Neural Network $(N^Q)$ 
\begin{algorithmic} [1]
\STATE {Split $N^{FP}$ into $\textbf{m}$ sections: $\{N_1, N_2, ..., N_m\}$ \Comment*[r]{\textbf{SPLIT phase}}
\STATE Each section $N_i$ has a set of layers: $\{l_1, l_2, ...,l_n\}$}
\FOR{$N_i$ in $\{N_1, N_2, ..., N_m\}$}  
\STATE Create a subnet $SN_i$ for section $N_i$ containing all the sections from $N_1$ to $N_i$ 
\STATE Quantize all the layers in section $N_i$ with the desired bitwidth to get $N^q_i$
\STATE Set all layers of section $N_i$ as trainable, freeze all other remaining layers in the subnetwork $SN_i$
\STATE Calculate $LOSS_i$ using the output activations of section $N_i$ of the full precision network and the subnetwork $SN_i$
%\STATE Apply KL-Divergence and calculate the objective loss of intermediate feature distillation: $LOSS_i$
\STATE  Minimize $\{LOSS_i\}$ to train $N^q_i$ to represent similar features as $N_i$
\ENDFOR
\STATE $N^Q \leftarrow$ merge$\{N^q_1, N^q_2, ..., N^q_m\}$ 
\Comment*[r]{\textbf{MERGE phase}}
\end{algorithmic}
\label{alg:overvall_technique}
\end{algorithm}
\vspace{-0.5cm}

\section{Experimental Results}
\if 0
\begin{table*}
	\centering
	\caption{Summary of results comparing our approach (DCQ) to a conventional approach (DoReFa-Net) for different networks 
		considering binary and ternary weight quantization.}	
	\includegraphics[width=1\linewidth]{figs/results.pdf}
	\label{table:results_summary}
\end{table*}
\fi 
\begin{table}[t]
	\centering
	\caption{Summary of results comparing \dcq (our appraoch) to DoReFa-Net for different networks 
		considering binary and ternary weight quantization.}	
	\includegraphics[width=1\linewidth]{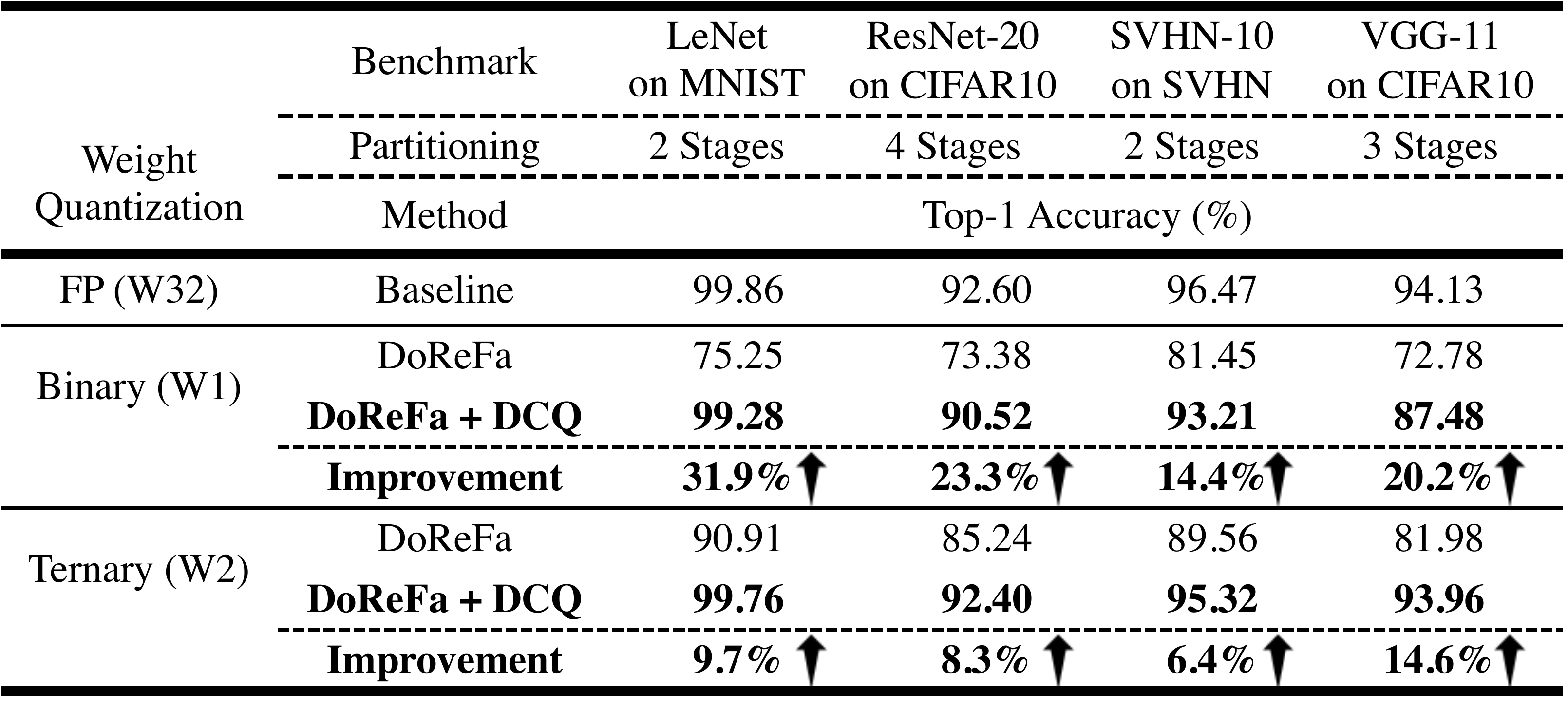}
	\label{table:t1}
	\vspace{-0.6cm}
\end{table}
\begin{table}[t]
	\centering
	\caption{Summary of results comparing our approach (DCQ) to state-of-the-art quantized training methods.}	
	\includegraphics[width=1\linewidth]{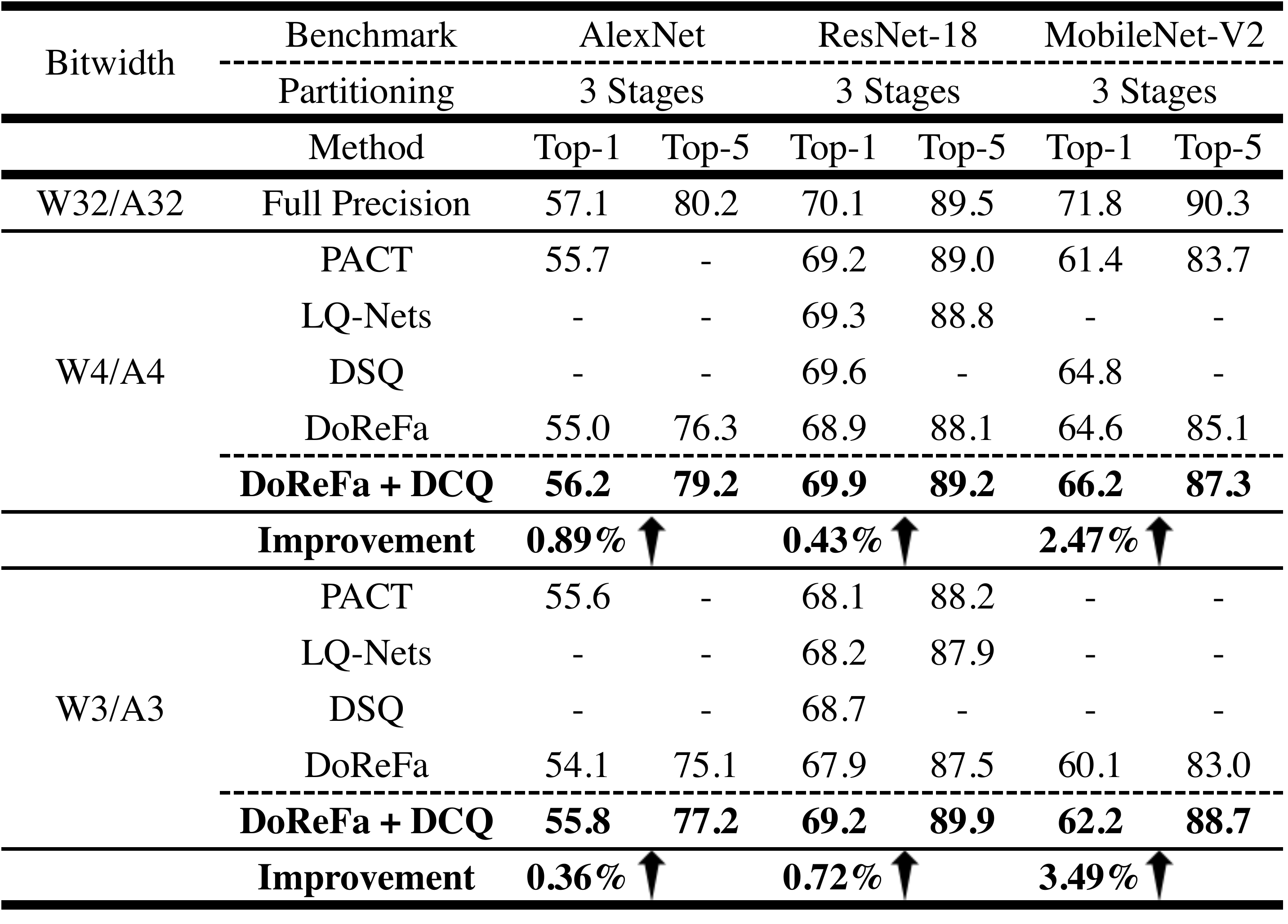}
	\label{table:t2}
	\vspace{-0.5cm}
\end{table}
\begin{table}
	\centering
	\caption{Comparing \dcq to a knowledge distillation based quantization method, Apprentice.}	
	\includegraphics[width=1\linewidth]{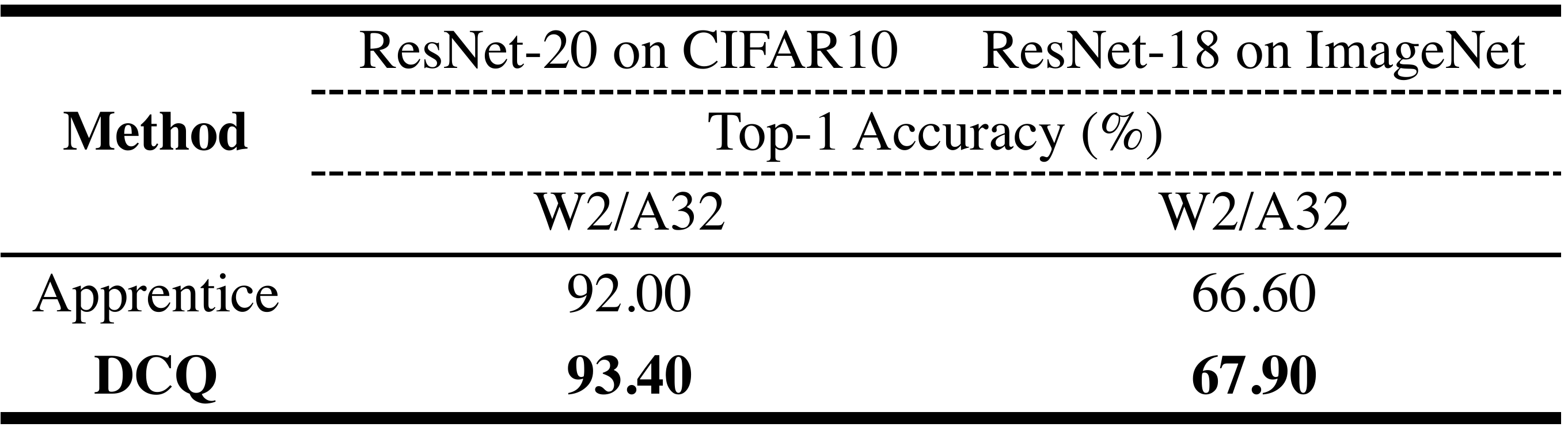}
	\label{table:t3}
	\vspace{-0.6cm}
\end{table}
\if 0
\begin{wrapfigure}{r}{0.3\textwidth}
\centering
\includegraphics[width=0.5\textwidth]{./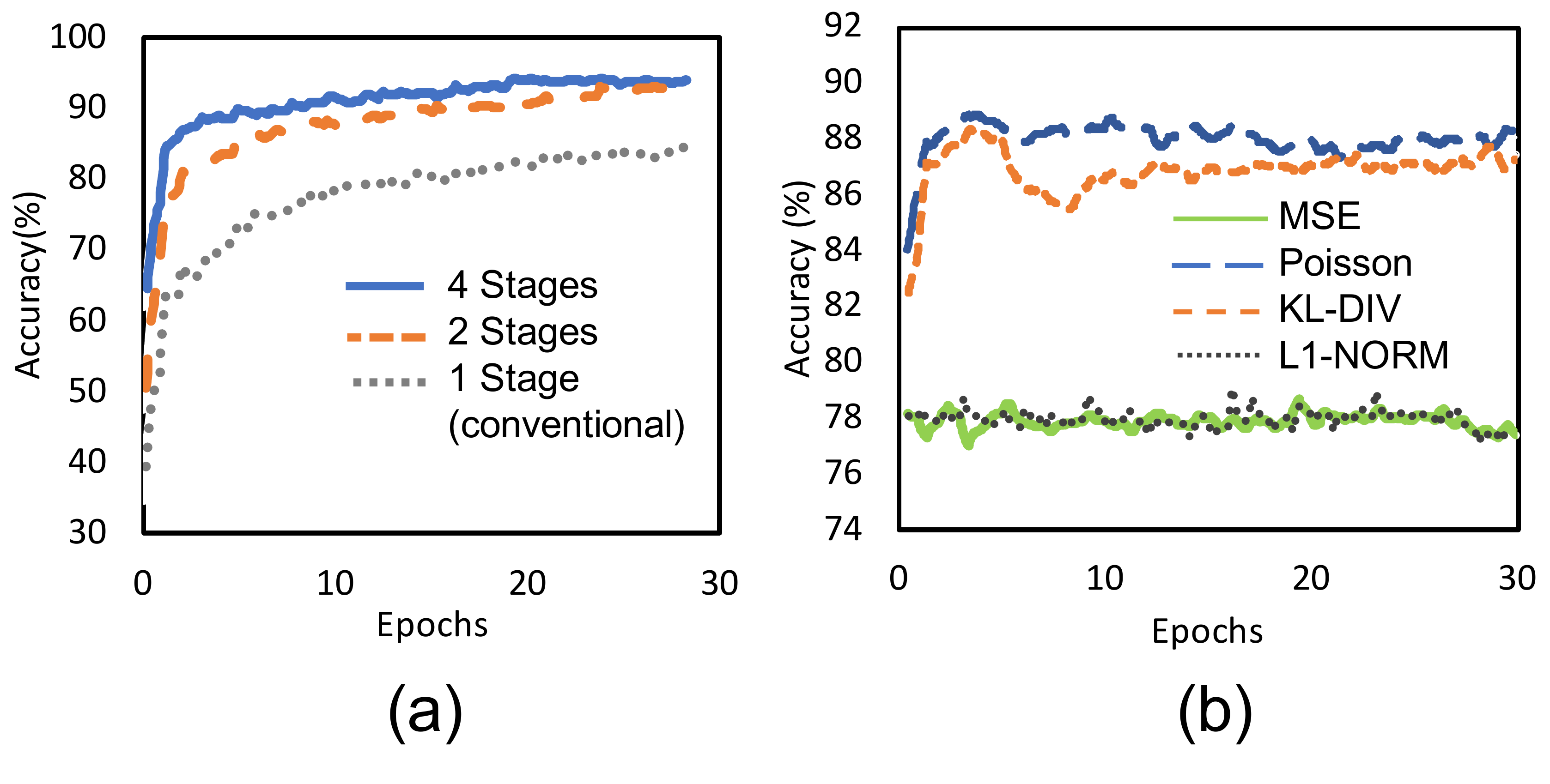}
\caption{\label{fig:frog1}This is a figure caption.}
\end{wrapfigure}
\fi 
\subsection{Experimental Setup} 
In this section, we evaluate the efficacy of our proposed approach on various DNNs (AlexNet, LeNet, MobileNet, ResNet-18, ResNet-20, SVHN and VGG-11) and different datasets: CIFAR10, ImageNet, MNIST, and SVHN.
We compare our approach to conventional end-to-end training approach. We consider DoReFa-Net~\cite{Zhou2016DoReFaNetTL} as our baseline but also show comparision with BWN~\cite{xnornet:eccv:2016} in Section~\ref{sec:xnor_comparision}, and Apprentice~\cite{apprentice:2017} , in addition to state-of-the-art quantized training methods: PACT~\cite{Choi2018PACTPC},  LQ-Net~\cite{DBLP:conf/eccv/ZhangYYH18}, DSQ~\cite{DBLP:journals/corr/abs-1908-05033} in Section~\ref{sec:apprentice_comparision}.

For all the experiments, we use an open source framework for quantization, Distiller~\cite{neta_zmora_2018_1297430}.
%
%The reported accuracies for DoReFa-Net are with the built-in implementations in Distiller.
%
While reporting accuracies in their paper, DoReFa-Net doesn't quantize first and last layers of the network whereas in our case, we quantize all the layers including the first and last layers.
Because of this difference in quantization and using built-in implementation of Distiller, the accuracies we report might not exactly match the accuracies reported in their paper.
%
%
%However, an independent implementation from a major company provides an unbiased foundation for the comparisons.
%
\subsection{ Binarization and Ternarization using DCQ}\label{sec:evaluation}
Table\ref{table:t1} shows summary of results comparing plain DoReFa to DoReFa + \dcq for different networks considering binary \{-1,1\} and ternary \{-1, 0, 1\} weight quantization for various networks: LeNet, ResNet-20, SVHN and VGG-11.
As seen, integrating \dcq into DoReFa outperforms the conventional approach and achieves a consistent improvements across the different networks with average 22.45\% for binarization and 9.7\% for ternarization.
Delving into the results, the reported improvements can be attributed to the following reasons.
First, deep multi-hidden-layer neural networks are much more difficult to tackle as compared to shallower ones. Furthermore, end-to-end backpropagation can be inefficient~\cite{DBLP:conf/icml/JaderbergCOVGSK17}. Thus, adopting such divide and conquer approach yields simpler subproblems that are easier to optimize.
Second, matching intermediate learning objectives also guides the optimization as compared to following a single global objective that indirectly specifies learning objectives to the intermediate layers.
\if 0
- Shallow supervised 1-hidden layer neural networks have a number of favorable
properties that make them easier to interpret, analyze, and optimize than their
deep counterparts, but lack their representational power. 
- Here we pose the question: do CNN layers need
to be learned end-to-end to obtain high performance?
Can 
here we demonstrate that we can leverage the fast-to-converge and easy-to-optimize aspects of shallow networks without losing the high representation capabilities of deep networks.

(1) First, the use of a global objective means that
the final functional behavior of individual intermediate layers of a deep network is only indirectly
specified: 
(2) multiple-hidden-layer NNs are much more difficult to tackle theoretically.
(3) end-to-end backpropagation can be inefficient (Jaderberg et al., 2016; Salimans et al., 2017) in terms of computation
and memory resources. Moreover, for some learning problems, the full gradient is less informative
than other alternatives
\fi 

\niparagraph{Comparison with BWN.}\label{sec:xnor_comparision}
BWN~\cite{xnornet:eccv:2016} proposes approximate convolutions using binary operations for a set of networks.
%
%Since BWN involves re-designing the networks using the blocks proposed by them, we show comparison on LeNet as it is the only common benchmark between both the works.
%
We show comparison on LeNet as it is the only common benchmark between both the works.
As Table~\ref{table:t1} shows, our technique achieves an accuracy of 99.3\%, which is close to the accuracy of 99.2\% reported by BWN.
%
%However, BWN involves restructuring the original network and using per channel scaling factors for the weights where as our implementation involves just binary and ternary weights without any scaling factors.
However, BWN involves restructuring the original network architecture whereas our implementation does not introduce any changes to the architecture.

\if 0
\begin{table}
	\centering
	\caption{Comparison with Apprentice for ResNet-20 on CIFAR10 and ResNet-18 on ImageNet for 32 bits Activations and 2 bits weights.}	
	\includegraphics[width=0.7\linewidth]{figs/results_2.pdf}
	\label{table:apprentice_comparison}
	\vspace{-2ex}
\end{table}
\fi 
%
%\subsection{Comparison with a State-of-the-Art Knowledge Distlllation-based Work}\label{sec:apprentice_comparision}
\subsection{Comparison with State-of-the-Art Quantized Training Methods}\label{sec:apprentice_comparision}
Here, we provide comparison to multiple state-of-the-art quantized training methods considering both weights and activation quantization.
Table~\ref{table:t2} summarizes the results of comparing to PACT, LQ-Net, DSQ, and DoReFa (the baseline) for several networks (AlexNet, ResNet-18, MobileNet).
As seen, \dcq outperforms these previously reported accuracies and achieves on average improvements of 0.98\%, and 0.96\% for W4/A4 and W3/A3, respectively. 

We also provide a comparison against knowledge distillation-based method Apprentice~\cite{apprentice:2017}, a recent work which also combines knowledge distillation with quantization.
Table~\ref{table:t3} shows that our technique outperforms Apprentice for both ResNet-20 on CIFAR10, and ResNet-18 on ImageNet considering ternary weights quantization.
%
%As seen in the table, \dcq can achieve, on average, 1.75\% accuracy improvement to the results previously reported.
%
The reported improvement can be attributed to the fact that \dcq combines the conventional knowledge distillation approach, as in~\cite{apprentice:2017}, in addition to its unique intermediate learning approach by regressing the quantized network intermediate feature maps to the corresponding full precision ones in a stage wise fashion. 
Moreover, the network architecture of the student network in~\cite{apprentice:2017} is typically different from that of the teacher network as opposed to \dcq where same network architecture is utilized for the student network but with quantized weights.
From one side, this saves a huge amount of effort designing a student network architecture which might incur significant hyperparameter tuning. 
On the other side, it enables a direct finetuning instead of a complete training from scratch as a result of preserving the original network architecture.
\begin{figure*} 
    \centering
    \includegraphics[width=0.8\textwidth]{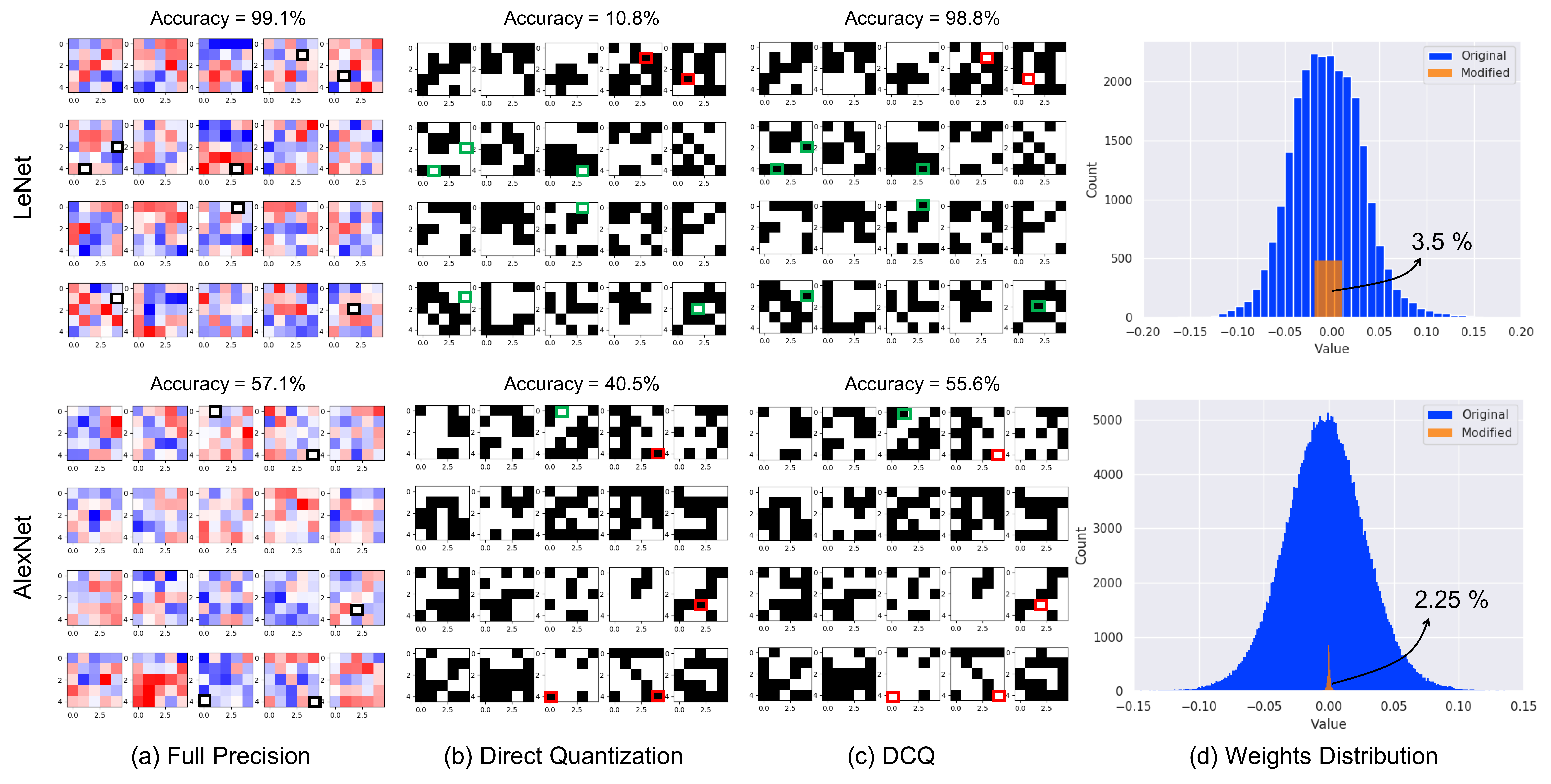}
    \caption{Visualization of a subset of weight kernels of the second convolutional layer of LeNet (top row), and AlexNet (bottom row), highlighting the differences between different versions of binary weight kernels: (a) Full precision weight kernels, (b) binary weight kernels upon direct binarization from full precision, (c) binary weight kernels obtained using our method DCQ, and (d) weights histogram of the convolutional layer highlighting the altered binary weights after training (using DCQ) relative to the original distribution}	
    \vspace{-0.1in} 
    \label{fig:kernel_vis}
\end{figure*}
\begin{figure*}
    \centering
    \includegraphics[width=0.9\textwidth]{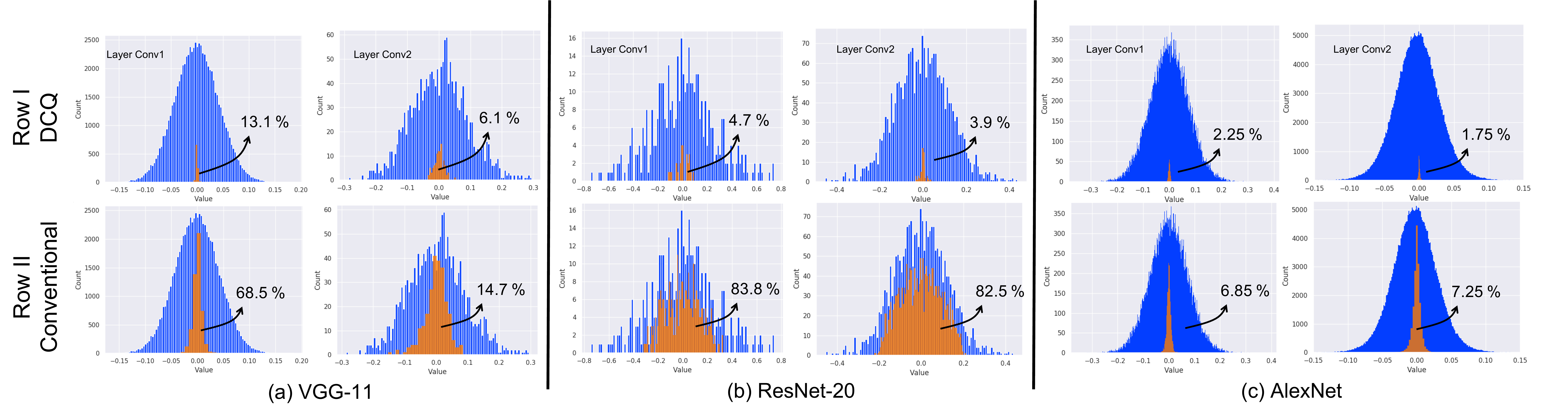}
    \caption{Weights histograms of the first two convolutional layers of three different DNNs: (a) VGG11, (b) ResNet-20, and (c) AlexNet, highlighting the altered portion of the trained binary weights (depicted percentages indicate the exact portion in orange) relative to the directly binarized weights. Original total weights histograms are shown in blue. Row I shows the results using our method (DCQ), and Row II shows for the conventional end-to-end taining method.}		
    \vspace{-0.2in}  
    \label{fig:kernel_hist}
\end{figure*}

\subsection{Analysis: DCQ vs Conventional Binary Kernels}
This section provides an analysis of our obtained binary weight kernels and sheds light on some interesting observations.
We start by posing the following questions: how are trained binary weight kernels different from just direct binarization from the original full precision weight kernels? and whether different training algorithms can yield qualitatively different binary weight kernels?

Figure \ref{fig:kernel_vis} shows a visualization of a subset of weight kernels from the second convolutional layer of LeNet and AlexNet.
(a) is the original full precision kernels, (b) direct binarization of full precision kernels, and (c) binarization after training (applying DCQ).
In the figure, weights that are different between the trained binary kernel and the directly binarized kernel are highlighted with square rectangles across the three visualizations.
Spatially contrasting those highlighted altered weights on the full precision kernels, it can be noticed that they mostly share a common feature that is being low in magnitude (shown as  white squares in (a)).
From statistical point of view, Figure \ref{fig:kernel_vis} (d) shows the original full precision weights histogram (in blue) and overlaying the portion of the altered weights (in light orange).
We can observe the following. First, during training, only very small percentage of the weights are actually altered relative to the total number of weights. Specifically, in this example, it is around 3.5\% and 2.25\% for LeNet and AlexNet respectively, of the total weights got impacted by training. 
Moreover, despite the marginal difference between the binary kernels, they experience dramatic accuracy difference: 10.8\% vs 98.1\% for kernels in (b) and (c) respectively, for LeNet, and 40.5\% and 55.6\% for AlexNet.

Now, to check whether this is a general trend and whether different training algorithms has an impact on this, we extend our statistical analysis to more networks.
Figure \ref{fig:kernel_hist} shows weight histograms of the first two convolutional layers of AlexNet, ResNet-20, and VGG-11.
As seen in the figure, first, for Figure \ref{fig:kernel_hist} Row I (DCQ), the altered portion of binary weights during training is consistently small in both number and magnitude across different layers and different networks.
Second, contrasting that behavior using DCQ vs using the conventional end-to-end quantized training, as shown in Figure \ref{fig:kernel_hist} Row II (Conventinoal), we see that binary weight kernels clearly encounter much more variations during the conventional end-to-end training as compared to our approach, \dcq.

Comparing the two training algorithms, \textit{\dcq yields minimal changes in the right place} to the binary weights as the entire technique is based on matching  the intermediate features represented by weight kernels.
%
%Which, consequently yields faster convergence and higher accuracies at the same time.
%
Which, consequently, leads to faster convergence behavior and higher solution quality at the same time.
Moreover, this opens up the possibility of \textit{magnitude-constrained weight training} where only weights below a certain magnitude are set to be trainable which can potentially improve the optimization process further.
%
%Moreover, this opens up the possibility of constraining trainable weights, during quantized training, below a certain magnitude which might accelerate the optimization process and eventually yield higher quality solutions.

%We can detect general trends of changes that trained binary weights undergo relative to the direct binarization from the original full precision. 
%
%highlighting the differences between different versions of binary weight kernels: (a) Full precision weight kernels, (b) binary weight kernels upon direct binarization from full precision, (c) binary weight kernels obtained using our method DCQ, and (d) weights histogram of the second convolutional layer of LeNet highlighting the altered binary weights after training (using DCQ) relative to the original distribution
%
\subsection{Exploratory Studies}\label{sec:exploratory_studies}

%%
%\begin{figure}
%\centering
%\begin{minipage}{.45\textwidth}
%  \centering
%  \includegraphics[width=0.6\columnwidth]{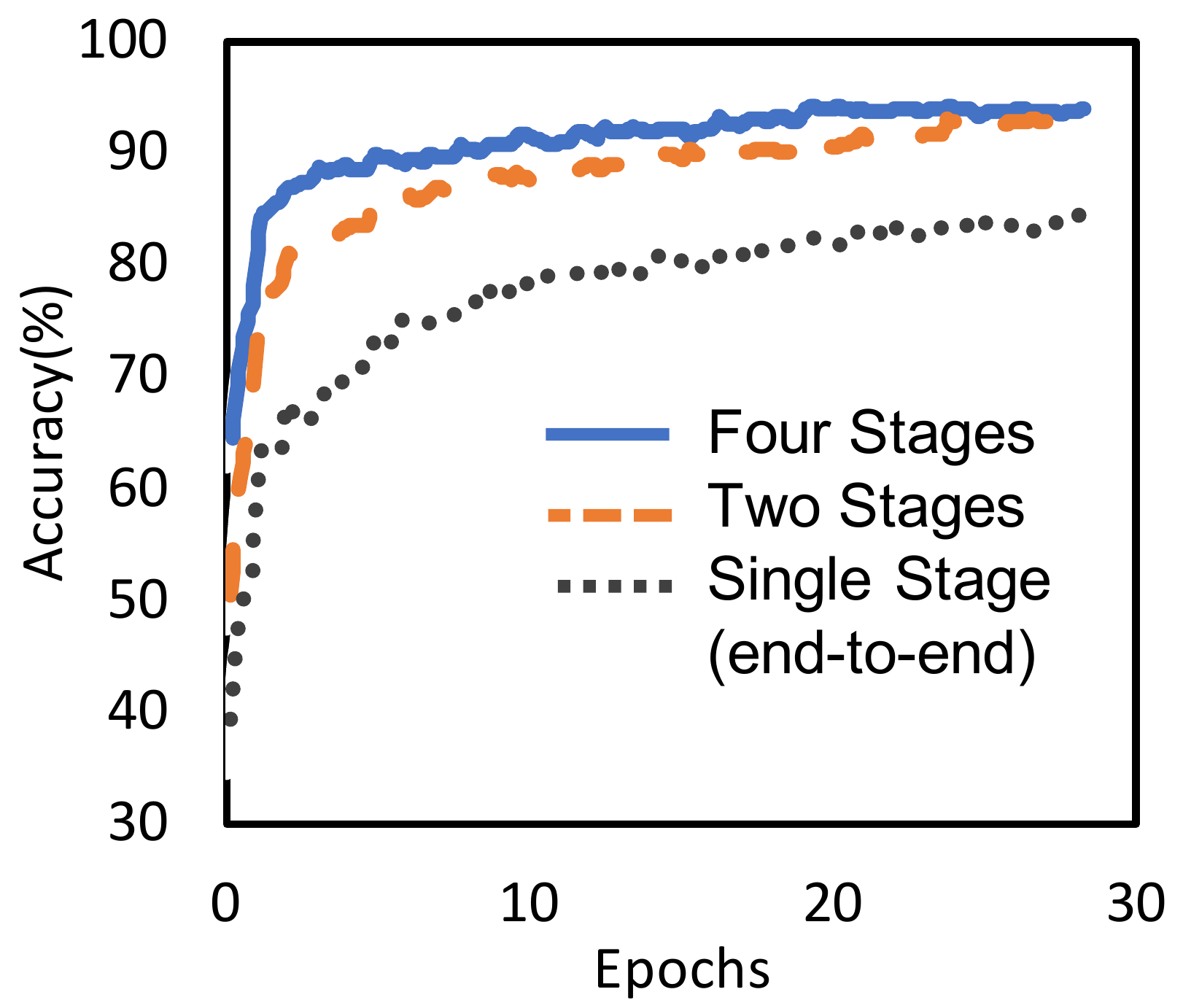}
%  \captionof{figure}{Impact of different splitting on the convergence behavior}
%    \vspace{-.2in}
%  \label{fig:split}
%\end{minipage}%
%\hfill
%\begin{minipage}{.45\textwidth}
%  \centering
%  \includegraphics[width=0.65\columnwidth]{figs/loss.pdf}
%  \captionof{figure}{Impact of different loss formulations on the convergence behavior}
%  \vspace{-.2in}
%  \label{fig:loss}
%\end{minipage}
%\end{figure}
%%
%
%On the other side, as the number of splitting points increases yielding more subnetworks each with lesser number of layers, the learning capabilities get noticeably constrained, hence might be sub-optimal.
%
%Thus, optimum splitting, ideally, should occur such that each resultant section has a sufficient capacity that strikes a balance between the learning capabilities and the ease of optimization.
%
\begin{figure*}[t]
    \centering
    \includegraphics[width=0.9\textwidth]{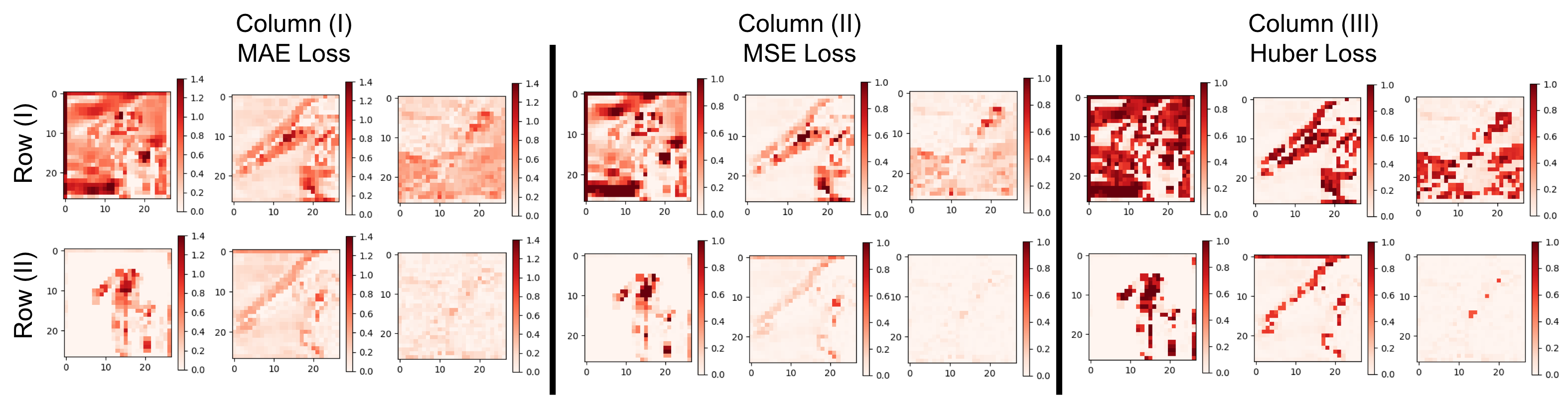}
    \caption{Loss visualization of intermediate feature maps samples. Row(I): before DCQ training, Row(II): after DCQ training. Columns show results for different loss formulations. Col(I) MAE, Col(II) MSE, and Col(III) Huber loss. The results are for the second convolution layer in AlexNet with binary quantization.}	
    \vspace{-3ex}
    \label{fig:loss}
\end{figure*}

\niparagraph{Impact of different loss formulations for intermediate learning.}
%
%We have examined various loss formulations for intermediate features learning: Poisson, KL-divergence, Mean Squared Error (MSE), and L1-Loss.
%
%Figure \ref{fig:loss} depicts the convergence behavior for these four loss formulations.
%
%As can be seen, among the considered formulations, Poisson seems to be the most effective during the intermediate learning process where KL-Divergence comes second. The trends are similar for the other networks.
%
%As both Poisson Loss and KL-Divergence are a measure of how far two distributions are from each other, they seem to have better performance than the traditional loss functions.
%
\if 0
All machine learning algorithms rely on minimizing a loss function to achieve a certain objective.
The parameters of the network (filters in the convolutional layers, weight matrices in the fully-connected layers and biases) are trained by back-propagating the derivative of the loss with respect to the parameters throughout the network, and updating the parameters via stochastic gradient descent.
Broadly speaking, according to the particular task, loss functions can be categorized into two types: Classification and Regression Loss.
Fundamentally, classification is about predicting a label (discrete valued output) and regression is about predicting a quantity (continuous valued output).
In the context of this paper focusing on image classification tasks, \dcq divides the original classification problem into multiple \textit{regression} problems by matching the intermediate feature (activation) maps.
For classification tasks, cross-entropy loss measures the performance of a classification model whose output is a probability value between 0 and 1. 
Cross-entropy loss increases as the predicted probability diverges from the actual label.
%Cross-entropy loss increases as the predicted probability diverges from the actual label.
%
For regression tasks, different loss functions can be used. Here, we have examined three of the most commonly used loss formulations. Namely: (1) Mean Square Error (MSE); (2) Mean Absolute Error (MAE); (3) Huber Loss.
\fi
%
%\if 0
\begin{figure}
    \centering
    \includegraphics[width=0.45\textwidth]{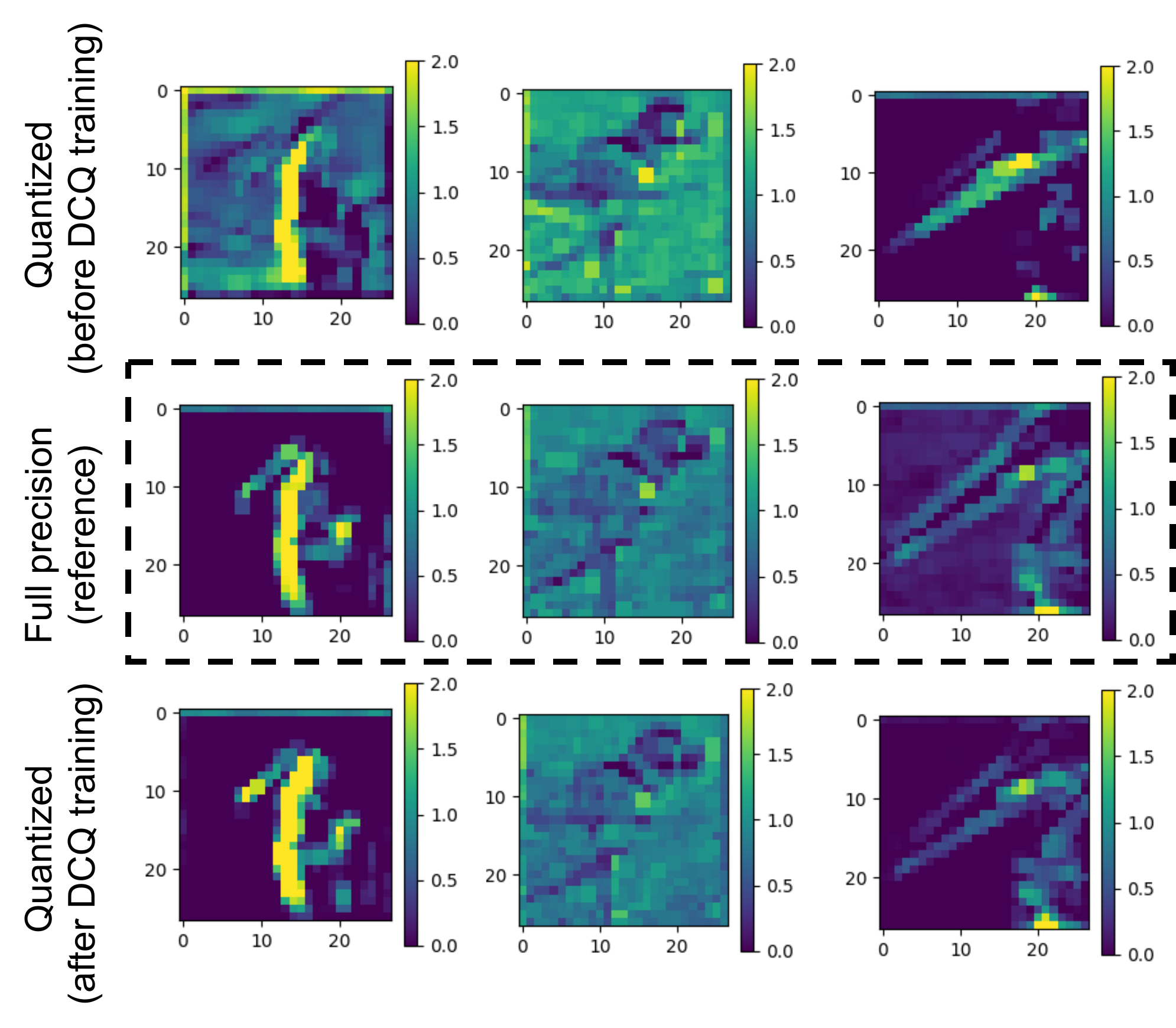}
    \caption{Feature maps before and after DCQ training compared to full precision maps. The results are for the second convolution layer in AlexNet with binary quantization.}
 %   \vspace{-4ex}
    \label{fig:fmaps}
\end{figure}
%\fi 
%
As mentioned in section~\ref{sec:loss_func}, we have examined three of the most commonly used loss formulations. Namely: (1) Mean Square Error (MSE); (2) Mean Absolute Error (MAE); (3) Huber Loss.
Figure~\ref{fig:loss} shows different samples of feature maps losses (for the second convolution layer of AlexNet wtih binary weights). Row(I) shows different samples of feature map losses before \dcq training. 
Row(II) shows the losses for the same samples after \dcq training (matching feature maps). Different columns show different loss formulations. Col(I): MSE Loss; Col(II): MAE Loss; and Col(III): Huber Loss.
As it can be seen, the feature map losses (the amount of redness) significantly decreases after \dcq training as a result of regressing the quantized model intermediate feature maps to the full precision counterparts.
We can also notice that the behavior is consistent across different regression losses. Nevertheless, based on our experimentation, among the considered formulations, MSE seems to be the most effective during the intermediate learning process. The trends are similar for the other networks.
Figure~\ref{fig:fmaps} compares visualizations of different samples of actual feature maps before and after \dcq training with respect to the full precision ones demonstrating the effectiveness of the proposed approach.
Lastly, divide and conquer is a very basic and universal engineering principle that is commonly and widely applied across a variety of fields. Here, we propose a procedure that extends such effective principle to quantized training of neural networks.
We also provide a preliminary analysis on the impact of the number of splitting points in the Appendix~\ref{appendix:splitting}.

\subsection{Impact of the number of splitting points.}\label{appendix:splitting}
\begin{figure}
    \centering
    \includegraphics[width=0.23\textwidth]{}
    \caption{Impact of different splitting on the convergence behavior for VGG-11 (ternary quantization).}		  
    \vspace{-2ex}
    \label{fig:split}
\end{figure}
As number of splitting points increases, the large optimization problem gets divided into smaller subproblems.
Thus, on one side, it becomes easier to solve each subproblem separately.
On the other side, however, the complexity overhead increases as well.
We leave the optimal choice of how many stages a network should be divided and how many layers per stage to future work.
Here, we provide one experimental example to give some intuition about the impact of different splitting points.
Figure \ref{fig:split} shows the convergence behavior for different splittings of VGG-11: four-stage and two-stage splitting as compared to single stage (conventional knowledge distillation).
As seen in the figure, not only the convergence is faster as number of stages increases but also it eventually converges to a higher final accuracy as compared to lesser number of stages or no splitting at all.
\section{Theoretical guarantees of DCQ: Upper bound on accumulated residual quantization error through splitting sections}
One issue that arises as a result of the strategy of splitting into sections and training each section separately is accumulation of error residuals through sections which may impact the overall performance of the proposed technique.
Here, we theoretically derive an upper bound on the total accumulated error across the resulting subnetworks after splitting using a chaining argument and utilizing Lipschitz continuity. 
\subsection{Error Upper Bound: Single-stage}\label{singlestage}
We control the neural network’s Lipschitz constant \cite{szegedy2013intriguing,bartlett2017spectrally,cisse2017parseval,gouk2018regularisation} to suppress network’s accumulation of error.
The Lipschitz constant describes: when input changes, how much does the output change correspondingly. %For a function $\mathcal{F}_1: x \rightarrow y$ , 
For a function $f:X\rightarrow Y$, if it satisfies
% (f(x1), f(x2)) ≤ kDX(x1, x2), ∀x1, x2 ∈ X
\[ \|f(x_1)- f(x_2)\|_Y \leq L\|x_1- x_2\|_X, \ \ \forall \ x_1, x_2 \in X \]
for $L\geq 0$, and norms $\|\cdot\|_X$ and $\|\cdot\|_Y$ on their respective spaces, then we call $h$ Lipschitz continuous and  $L$ is the known as the Lipschitz constant of $h$.

Let us consider a sub-network (section) $i$ with full precision weights $W_{fp}$ and corresponding quantized inputs $W_q$, and let the full precision network output be $f_{fp}(x) = \sigma(W_{fp}x)$ for some activation function $\sigma$ and the quantized network output be $f_q(x) = \sigma(W_q x)$.
For a one layer network, full precision network $f_{fp}$ has Lipschitz constant $L$, which satisfies 
$$ L\le C_\sigma \|W_{fp}\| \textnormal{ for } C_\sigma = \frac{d\sigma}{dx}.$$
This bound is immediate from the fact that $\nabla f_{fp}(x) = \sigma'(W_{fp}x)\cdot \begin{bmatrix}
W_{\cdot, 1} & ... & W_{\cdot, d}
\end{bmatrix}$, and $L \le \max_x \|\nabla f_{fp}(x)\|$.  

Assume the application of our quantization scheme leads to an error in the output of size $\|f_{fp}(x) - f_{q}(x)\|<\delta$. This comes from the quantization error guarantee of the used technique.  
Under this model, we can use a simple triangle inequality to get:
$$ \|f_q(x)-f_q(y)\| < L \|x-y\| + 2\delta.$$
The problem now comes down to chaining layers of a network together.  If we were ignoring quantization, a tighter bound for the Lipschitz constant of $f_{fp}$ can be found in \cite{virmaux2018lipschitz} for arbitrary networks and \cite{zou2019lipschitz} for particular convolutional networks.  However, these approaches are not condusive to analyzing the layer-wise quantization error.

\subsection{Error Upper Bound: N-stages (Network-wide)}\label{Nstage}
Let’s consider a feed-forward network with the following function formulation.
\[ f(x) = (\phi^{(m)} \circ \ \phi^{(m-1)} \circ  ... \circ \ \phi^{(1)}) (x) \]
%f(x) = (φl ◦ φl−1 ◦ ... ◦ φl-11)(x) 
%
where $\phi^{(i)}$ is a given layer of the network with Lipschitz constant $L_i$. 

\begin{lemma}\label{lipschitz}
Let $f_{fp}$ be an $m$ layer network, and each layer has Lipschitz constant $L_i$.  Assume that quantizing each layer leads to a maximum pointwise error of $\delta_i$, and results in a quantized $m$ layer network $f_q$.  Then for any two points $x,y\in X$, $f_q$ satisfies
\begin{align*}
\|f_q(x) - f_q(y)\| <  \left(\prod_{j=1}^m L_j \right) \|x-y\| + 2 \Delta_{m,L},
\end{align*}
where $\Delta_{m,L} = \delta_m + \sum_{i=1}^{m-1} \left(\prod_{j=i+1}^m L_j\right) \delta_i$.
\end{lemma}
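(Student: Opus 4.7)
The plan is to prove the lemma by induction on the number of layers $m$, using the single-layer bound from Section \ref{singlestage} as the base case and as the key recurrence at each step. Define the partial quantized network $f_q^{(k)} = \phi^{(k)}_q \circ \phi^{(k-1)}_q \circ \cdots \circ \phi^{(1)}_q$ and note that $f_q = f_q^{(m)}$. The base case $m=1$ is exactly the bound derived in Section \ref{singlestage}, namely $\|f_q(x)-f_q(y)\| < L_1 \|x-y\| + 2\delta_1$, which matches the claimed formula since $\Delta_{1,L} = \delta_1$.

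For the inductive step, I would assume the bound holds at level $m-1$ and then apply the single-layer bound to the final layer $\phi^{(m)}_q$ with inputs $u = f_q^{(m-1)}(x)$ and $v = f_q^{(m-1)}(y)$. The single-layer result gives
\begin{equation*}
\|\phi^{(m)}_q(u) - \phi^{(m)}_q(v)\| < L_m \|u-v\| + 2\delta_m.
\end{equation*}
Substituting the inductive hypothesis $\|f_q^{(m-1)}(x) - f_q^{(m-1)}(y)\| < \bigl(\prod_{j=1}^{m-1}L_j\bigr)\|x-y\| + 2\Delta_{m-1,L}$ yields
\begin{equation*}
\|f_q^{(m)}(x) - f_q^{(m)}(y)\| < \Bigl(\prod_{j=1}^{m} L_j\Bigr)\|x-y\| + 2\bigl(L_m \Delta_{m-1,L} + \delta_m\bigr).
\end{equation*}

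The remaining step is purely bookkeeping: I would verify the recurrence $\Delta_{m,L} = \delta_m + L_m \Delta_{m-1,L}$, which unfolds to the claimed closed form $\Delta_{m,L} = \delta_m + \sum_{i=1}^{m-1}\bigl(\prod_{j=i+1}^{m} L_j\bigr)\delta_i$. Concretely, multiplying $\Delta_{m-1,L} = \delta_{m-1} + \sum_{i=1}^{m-2}\bigl(\prod_{j=i+1}^{m-1}L_j\bigr)\delta_i$ by $L_m$ absorbs $L_m$ into each product, turning $\prod_{j=i+1}^{m-1}L_j$ into $\prod_{j=i+1}^{m}L_j$ and promoting the $\delta_{m-1}$ term to $\bigl(\prod_{j=m}^{m}L_j\bigr)\delta_{m-1}$, so that adding $\delta_m$ produces exactly $\Delta_{m,L}$.

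The proof is essentially mechanical once the induction is set up, so I do not anticipate a genuine obstacle. The only subtle point worth flagging is the role of the activation's Lipschitz constant $C_\sigma$: I would either absorb it into each $L_i$ (treating $L_i$ as the full per-layer Lipschitz constant including the nonlinearity, consistent with the single-layer bound in Section \ref{singlestage}) or state it explicitly as a separate factor. Either convention is fine, but consistency with the single-layer derivation should be made explicit so that the chaining inequality above is genuinely applicable at each level. Indexing the products $\prod_{j=i+1}^{m} L_j$ carefully (with the convention that an empty product equals $1$, needed when $i=m$) is the only place where a routine mistake could creep in.
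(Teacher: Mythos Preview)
Your proposal is correct and follows essentially the same approach as the paper: the paper's proof also invokes the single-layer bound from Section~\ref{singlestage}, writes out the two-layer case explicitly, and then states that chaining the argument inductively up to $m$ yields the result. Your version is in fact more careful, since you explicitly verify the recurrence $\Delta_{m,L}=\delta_m+L_m\Delta_{m-1,L}$ that the paper leaves implicit.
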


\begin{proof}
Let $\phi^{(i)}_q$ be the quantized $i^{th}$ layer of the network.
From Section \ref{singlestage}, we know that 
$$\|\phi^{(i)}_q(x)-\phi^{(i)}_q(y)\| < L_i\|x-y\| + 2\delta_i.$$
Similarly, we know that feeding in the previous layer's quantized output yields
\begin{align*}
\|\phi^{(2)}_q\circ \phi^{(1)}_q(x)-\phi^{(2)}_q\circ \phi^{(1)}_q(y)\| &\le  L_2\| \phi^{(1)}_q(x)-\phi^{(1)}_q(y)\| + 2\delta_2\\
&\le L_2 L_1 \|x-y\| + 2 L_2 \delta_1 + 2\delta_2.
\end{align*}
\iffalse
Similarly, we know that the quantized layer compares to the full precision layer by $\|\phi_{i-1}(x) - (\phi_{i-1})_q(x)\|<\delta_{i-1}$ by assumption, so by the triangle inequality we get 
\begin{align*}
\|\phi_{i-1}(x)-\phi_{i-1}(y)\|&= \|\phi_{i-1}(x)-\phi_{i-1}(y) \pm (\phi_{i-1})_q(x) \pm (\phi_{i-1})_q(y)\|\\
&\le \|(\phi_{i-1})_q(x)-(\phi_{i-1})_q(y)\|+ 2\delta_{i-1},
\end{align*}  
so 
$$\|(\phi_i)_q(x)-(\phi_i)_q(y)\| < L_i\|(\phi_{i-1})_q(x)-(\phi_{i-1})_q(y)\| + 2L_i \delta_{i-1}  + 2\delta_i.$$
\fi
By chaining together the $i$ layers inductively up to $m$, 
%and observing the fact that the $2L_i \delta_{i-1}$ term will show up both in the $i^{th}$ layer equation and from substituting the $2\delta_{i-1}$ term into $L_i\|(\phi_{i-1})_q(x)-(\phi_{i-1})_q(y)\|$ for the $i-1$ layer,
we complete the desired inequality.
\end{proof}

A common practice to keep the product of Lipschitz constants small is by keeping the constants small using regularization or weight clipping. So if we quantize a DNN with a small constant, then the error across the entire network is close to just the sum of quantization errors. Experimentally, Lipschitz constant of each layer is found empirically by taking $\max_{x,y} \|\phi_i(x)-\phi_i(y)\| / \|\phi_{i-1}(x)-\phi_{i-1}(y)\|$. 

We also bound the error between the quantized network and the full precision network using similar arguments.
\begin{lemma}
Under the same assumptions as Lemma \ref{lipschitz}, for a point $x\in X$, $f_q$ satisfies
\begin{align*}
    \|f_q(x) - f_{fp}(x)\| \le 3\Delta_{m,L}.
\end{align*}
\end{lemma}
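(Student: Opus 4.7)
The plan is to mirror the inductive chaining used in Lemma \ref{lipschitz}, but now to compare the quantized network to the full-precision network evaluated at the \emph{same} input $x$ instead of comparing the quantized network to itself at two different inputs. Writing the partial compositions $F_i^{fp} = \phi^{(i)}\circ\cdots\circ\phi^{(1)}$ and $F_i^{q} = \phi^{(i)}_q\circ\cdots\circ\phi^{(1)}_q$, I would define the layer-wise propagated error $e_i := \|F_i^q(x) - F_i^{fp}(x)\|$ and aim to establish a one-step recurrence of the form $e_i \le L_i e_{i-1} + c\,\delta_i$ with $e_0 = 0$. Unrolling such a recurrence will automatically produce the structure of $\Delta_{m,L}$.

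To derive the recurrence I would insert the hybrid intermediate quantity $\phi^{(i)}_q(F_{i-1}^{fp}(x))$ --- the quantized layer $i$ applied to the full-precision activations of the previous stage --- and apply the triangle inequality:
$$e_i \;\le\; \|\phi^{(i)}_q(F_{i-1}^q(x)) - \phi^{(i)}_q(F_{i-1}^{fp}(x))\| + \|\phi^{(i)}_q(F_{i-1}^{fp}(x)) - \phi^{(i)}(F_{i-1}^{fp}(x))\|.$$
The second summand is bounded by $\delta_i$ directly from the per-layer quantization assumption. For the first summand I would invoke the quantized-layer Lipschitz inequality already established in Section \ref{singlestage} and reused in the proof of Lemma \ref{lipschitz}, namely $\|\phi^{(i)}_q(u) - \phi^{(i)}_q(v)\| \le L_i\|u-v\| + 2\delta_i$, which contributes $L_i e_{i-1} + 2\delta_i$. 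Combining the two pieces gives $e_i \le L_i e_{i-1} + 3\delta_i$, and a routine unrolling of this linear recurrence yields
$$e_m \;\le\; 3\delta_m + 3\sum_{i=1}^{m-1}\left(\prod_{j=i+1}^m L_j\right)\delta_i \;=\; 3\,\Delta_{m,L},$$
which is exactly the claim.

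The main subtlety, and the place I would be most careful, is the choice of triangle-inequality split in the one-step bound. A tighter split that applies the \emph{clean} (non-quantized) Lipschitz constant $L_i$ of $\phi^{(i)}$ to the propagated error $e_{i-1}$ --- going through the intermediate point $\phi^{(i)}(F_{i-1}^q(x))$ instead of $\phi^{(i)}_q(F_{i-1}^{fp}(x))$ --- would actually produce the sharper constant $\Delta_{m,L}$ rather than $3\Delta_{m,L}$. The factor $3$ in the statement is therefore a deliberate choice that keeps both lemmas grounded in the same quantized-layer one-step inequality, and I would follow that route for internal consistency, noting the possible sharpening only as a remark rather than pursuing it.
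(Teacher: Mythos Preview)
Your proposal is correct and follows essentially the same route as the paper: both insert the hybrid point $\phi^{(i)}_q(F_{i-1}^{fp}(x))$, bound the first summand via the quantized-layer inequality $\|\phi^{(i)}_q(u)-\phi^{(i)}_q(v)\|\le L_i\|u-v\|+2\delta_i$ and the second by $\delta_i$, and then chain inductively to reach $3\Delta_{m,L}$. Your write-up is in fact cleaner than the paper's sketch, and your side remark that the alternative split through $\phi^{(i)}(F_{i-1}^q(x))$ would sharpen the constant to $\Delta_{m,L}$ is a correct and worthwhile observation.
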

\begin{proof}
We know that $\|\phi^{(1)}_q(x) - \phi^{(1)}(x)\|<\delta_1$.  This means $\phi^{(2)}$ receives different inputs depending on whether $\phi^{(1)}$ was quantized or not, and thus requires the Lipschitz bound.  Thus

%\begin{align*}
\begin{dmath*}
    \|\phi^{(2)}_q((\phi^{(1)}_q(x)) - \phi^{(2)}(\phi^{(1)}(x))\| 
    \le \|\phi^{(2)}_q(\phi^{(1)}_q(x)) - \phi^{(2)}_q(\phi^{(1)}(x))\| 
\\    + \|\phi^{(2)}_q(\phi^{(1)}(x)) - \phi^{(2)}(\phi^{(1)}(x))\|  
    \le L_2 \|\phi^{(1)}_q(x) - \phi^{(1)}(x)\| + \\ 2\delta_2
    + \delta_2  \le  2L_2 \delta_1 + 3\delta_2.
\end{dmath*}
%\end{align*}
Chaining the argument for the $i^{th}$ layer inductively up to $m$, we arrive at the desired inequality.
\end{proof}

As the Lipschitz constant of the network is the product of its individual layers’ Lipschitz constants, $L$ can grow exponentially if $L_i \geq 1$.
This is the common case for normal network training \cite{cisse2017parseval}, and thus the perturbation will be amplified for such a network. 
Therefore, to keep the Lipschitz constant of the whole network small, we need to keep the Lipschitz constant of each layer $L_i < 1$. 
We call a network with $L_i < 1, \forall i = 1, ..., \ L$ a non-expansive network.
%

\iffalse
The quantization residual error per section can be controlled if we have a small Lipschitz constant $L$ (in optimal situation we can have $L \leq 1$). 
%
And thus, and the error across the rest of subnetworks will not be amplified, but reduced. Using a chaining argument, we can derive a formulation for the accumulated error residuals across the entire divided network.
%
\fi

\subsection{Lipschitz Constants in Classification Networks}

The Lipschitz constant is traditionally defined for regression problems where $f$ can take arbitrary values on $\mathbb{R}$, but it also has implications for classification networks.  For a classification network, the input is labeled data $(x_i, y_i)$ for $y_i$ coming from one of $K$ classes.
Then the output $f(x)$ is a function $f:X\rightarrow \mathbb{R}^K$.  Assume $(x_i, y_i)$ satisfies $y_i = k$ and $f$ can perfectly classify $x_i$.  Then if $f$ has a softmax output, the value $f(x_i) = e_k$ for $e_k$ being the unit vector on the $k^{th}$ class.  
A common problem for classification networks is to determine how much one can perturb the data point $x_i$ and maintain the correct classification (i.e., $f(x_i+\eta)_k > 1/2$).  For a more general network that isn't softmax, we can define the distance between classifications as $r = \frac{1}{2}\min_{\{i,j : y_i \neq y_j\}} \|y_i - y_j\|_2$.  This reduces to $r=1/2$ for softmax.  And this leads to the following theorem.

\begin{theorem}
Let $r= \frac{1}{2}\min_{\{i,j : y_i \neq y_j\}} \|y_i - y_j\|_2$, and let $f_{fp}$ and $f_q$ be the full precision and quantized $m$ layer networks as in Section \ref{Nstage}.  Let $L = \prod_{i=1}^m L_i$ be the Lipschitz constant of $f_{fp}$.  Then for any $(x_i,y_i)$ that $f_{q}$ perfectly classifies and any perturbation $\eta$ such that $\|\eta\| < \frac{r - 2\Delta_{m,L}}{L}$, $f_q$ will also classify $x_i + \eta$ with label $y_i$.

Moreover, if $f_{fp}$ perfectly classifies $x_i$, then for any perturbation $\eta$ such that $\|\eta\| < \frac{r-5\Delta_{m,L}}{L}$, $f_q$ will also classify $x_i + \eta$ with label $y_i$. 
\end{theorem}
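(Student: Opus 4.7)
The plan is to reduce both statements of the theorem to verifying that $f_q(x_i + \eta)$ lies strictly closer to the true label vector $y_i$ than to any competing label $y_j$, and then invoke the two inequalities of Section~\ref{Nstage} together with the margin definition of $r$. The key observation is the following classification criterion: if $\|f_q(x_i + \eta) - y_i\| < r$, then for every $j$ with $y_j \neq y_i$ the reverse triangle inequality and the definition $r = \tfrac{1}{2}\min_{y_j \neq y_i}\|y_j - y_i\|$ give $\|f_q(x_i + \eta) - y_j\| \geq \|y_j - y_i\| - \|f_q(x_i + \eta) - y_i\| > 2r - r = r$, so $y_i$ is the uniquely closest label vector and the point is classified correctly. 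Thus the entire theorem reduces to producing the bound $\|f_q(x_i + \eta) - y_i\| < r$ under each hypothesis.

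For the first statement, the assumption that $f_q$ perfectly classifies $x_i$ means $f_q(x_i) = y_i$ in the sense described in the paragraph preceding the theorem. I would then apply Lemma~\ref{lipschitz} with the pair $(x_i + \eta,\, x_i)$, giving $\|f_q(x_i + \eta) - y_i\| = \|f_q(x_i + \eta) - f_q(x_i)\| \leq L\|\eta\| + 2\Delta_{m,L}$. Plugging in the hypothesized bound $\|\eta\| < (r - 2\Delta_{m,L})/L$ immediately yields $\|f_q(x_i + \eta) - y_i\| < r$, which by the above criterion finishes part one.

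For the second statement, the hypothesis is on $f_{fp}$ rather than $f_q$, so I cannot equate $f_q(x_i)$ with $y_i$ directly; instead I would use the second lemma of Section~\ref{Nstage} to transfer: $\|f_q(x_i) - y_i\| = \|f_q(x_i) - f_{fp}(x_i)\| \leq 3\Delta_{m,L}$. Chaining this with Lemma~\ref{lipschitz} through one more triangle inequality produces
\begin{equation*}
\|f_q(x_i + \eta) - y_i\| \leq \|f_q(x_i + \eta) - f_q(x_i)\| + \|f_q(x_i) - y_i\| \leq L\|\eta\| + 2\Delta_{m,L} + 3\Delta_{m,L},
\end{equation*}
and the tightened bound $\|\eta\| < (r - 5\Delta_{m,L})/L$ once again forces the right-hand side below $r$.

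There is no serious obstacle here; the proof is essentially three triangle inequalities glued to the two lemmas already proved. The only care required is bookkeeping: first, pinning down precisely what \emph{perfect classification} means so that the output vector can literally be replaced by the label vector $y_i$ in norm (rather than, say, interpreting it only up to softmax activation); second, tracking whether inequalities are strict or non-strict through the chain, so that the final classification conclusion $\|f_q(x_i + \eta) - y_j\| > \|f_q(x_i + \eta) - y_i\|$ is genuinely strict and yields a unique argmin. Both concerns are clerical rather than mathematical.
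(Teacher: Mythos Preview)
Your proposal is correct and follows essentially the same route as the paper's proof: apply Lemma~\ref{lipschitz} to bound $\|f_q(x_i+\eta)-f_q(x_i)\|$ for the first statement, and for the second statement insert one triangle inequality together with the $3\Delta_{m,L}$ bound from the second lemma of Section~\ref{Nstage}. If anything, your version is slightly more careful than the paper's in spelling out the classification criterion (why $\|f_q(x_i+\eta)-y_i\|<r$ forces the correct label via the reverse triangle inequality), which the paper leaves implicit.
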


\begin{proof}
From the guarantee of Lemma \ref{lipschitz}, we know $$\|f_q(x+\eta) - f_q(x)\|\le L\|(x+\eta) - x\| + 2\Delta_{m,L}$$.  Under the assumption on the norm of $\eta$, we get
\begin{align*}
    \|f_q(x+\eta) - f_q(x)\|\le r.
\end{align*}
Thus given a perfect classification $f_q(x_i)=e_k$ for $y_i=k$, $f_q(x+\eta)$ will similarly classify $x_i+\eta$ as $y_i$.

If we consider a full precision network $f_{fp}$ that classifies $x_i$ perfectly, then we must simply apply a triangle inequality to attain
\begin{align*}
	\|f_q(x+\eta) - f_{fp}(x)\| &\le \|f_q(x+\eta) - f_q(x)\| + \|f_q(x) - f_{fp}(x)\| \\
	&\le L\|(x+\eta) - x\| + 2\Delta_{m,L}  + 3\Delta_{m,L}.
\end{align*}
Thus for $\eta$ such that $\|\eta\|< \frac{r-5\Delta_{m,L}}{L}$, we will attain $\|f_q(x+\eta) - f_{fp}(x)\|$ and classify $x+\eta$ correctly by the same argument.
\end{proof}

%\subsection{Weight Only vs Weight \& Activation Quantization Implications on Error Bounds}
%\subsection{Optimum DCQ Splitting}

\section{Related Work}
\if 0
\begin{table*}
	\centering
	\caption{Comparison with other works.}	
	\includegraphics[width=0.8\linewidth]{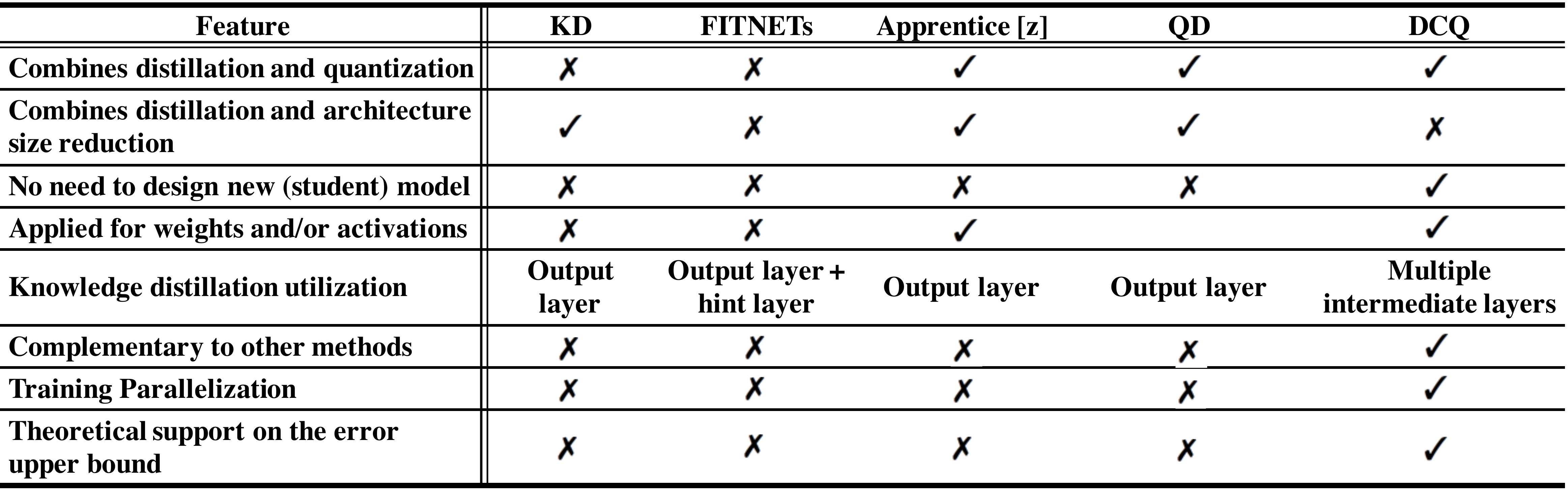}
	\label{table:others_comparison}
	\vspace{-2ex}
\end{table*}
\fi 
%\begin{figure*}
%    \centering
%    \includegraphics[width=0.8\textwidth]{}
%    \caption{KD Student Variants.}		  
%    \label{fig:kd_vars}
%\end{figure*}

\niparagraph{Knowledge distillation.}
Knowledge distillation~\cite{DBLP:journals/corr/HintonVD15} is proposed to attain a smaller/shallower neural network (student) from one or an ensemble of bigger deep networks (teacher).
The student network is trained on a softened version of the \emph{final} output of teacher(s)~\cite{DBLP:conf/kdd/BucilaCN06}.
\textsc{FitNets}~\cite{DBLP:journals/corr/RomeroBKCGB14} extends knowledge distillation by extracting a hint from the teacher to train even a deeper but thinner student.
The hint is an intermediate feature representation of the teacher, that is used as a regularizer to pretrain the first few layers of the deep and thin student network.
%
%An adaption layer is also used to overcome the mismatch in the dimensionality of the hint and the layer to which the hint is applied.
%
After the pretraining phase, the full knowledge distillation is used to finish the training of the student.
\textsc{FitNets}~\cite{DBLP:journals/corr/RomeroBKCGB14} does not explore hints from more than one intermediate layer of the teacher.
Furthermore, \textsc{FitNets} applies the knowledge distillation pass over the entire student network at once.
\textsc{FitNets} are a complementary approach to our sectional knowledge distillation and similar hints can be utilized for each section.
Nonetheless, the following discusses the differences.
In contrast to this technique, \dcq (1) partitions the neural network to multiple independent sections and (2) applies knowledge distillation to each section in isolation and trains them independently, (3) not utilizing the intermediate representations as hint for pretraining.
(4) After the sections are trained through knowledge distillation, they are put together instead of applying another phase of training as done in \textsc{FitNets}~\cite{DBLP:journals/corr/RomeroBKCGB14}.
%
%(5) Moreover, \dcq, exclusively, uses a Poisson loss for the sectional multi-instance knowledge distillation while \textsc{FitNets} uses regularization for hints and cross-entropy for the knowledge ditsilation step.
%
(5) Moreover, \dcq, exclusively, applies various regression losses in matching the quantized network intermediate feature maps to the corresponding full precision ones in a stage wise fashion. 
(6) Last but not least, the objective differ as the knowledge distillation and \textsc{FitNets} aim to compress the network while \dcq quantizes it preserving the teacher's original network architecture.

Other work~\cite{DBLP:conf/cvpr/YimJBK17} proposes an information metric, in terms of inter-layer flow (the inner product of feature maps), using which a teacher DNN can transfer the distilled knowledge to other student DNNs.

Knowledge distillation is also used for training a lower bitwidth student network from a full-precision teacher~\cite{apprentice:2017,DBLP:conf/iclr/PolinoPA18,DBLP:conf/aaai/WangBSZCY19}.
However, these works do not partition the network as \dcq does and also do not utilize teacher's intermediate layers. 

%
%Our proposed method exploits the effectiveness of knowledge distillation and employs it, for the first time, in the context of network quantization. Thus, the student variant in this case is a quantized (low precision) version of the original teacher (full precision).
%
%Figure \ref{fig:kd_vars} illustrates the differences between the above mentioned student variants through high level sketches of the respective training procedure and loss optimization.
%
%In the following subsections, we further elaborate on the training procedure and provide more details about the used loss functions for optimization.

\niparagraph{Other quantization techniques.}
Multiple techniques~\cite{Zhou2016DoReFaNetTL,Mishra2017WRPNWR,Zhu2016TrainedTQ} have been proposed for low bidwidth/quantized training of neural networks.
DoReFa-Net~\cite{Zhou2016DoReFaNetTL} uses straight through estimator~\cite{DBLP:journals/corr/BengioLC13} for quantization and extends it for any arbitrary $k$ bit quantization.
DoReFa-Net also proposes a method to train a CNNs with low bitwidth weights and activations, low bitwidth parameter gradients using deterministic quantization of weights, activations and stochastic quantization of activations.
%
%DoReFa-Net \cite{Zhou2016DoReFaNetTL} trains quantized convolutional neural networks with parameter gradients which are stochastically quantized to low bitwidth numbers before they are propagated to the convolution layers. \cite{Mishra2017WRPNWR} introduces a scheme to train networks from scratch using reduced-precision activations by decreasing the precision of both activations and weights and increasing the number of filter maps in a layer. 
%
TTQ~\cite{Zhu2016TrainedTQ} proposes a method to reduce the weights to ternary values by adding scaling coefficients to each layer.
These scaling coefficients are learnt during training and during deployment, weights are directly quantized to ternary bitwidths and these scaling coefficients are used to scale the weights during inference.
%
%TTQ~\cite{Zhu2016TrainedTQ} performs the training phase of the network in full precision, but for inference uses ternary weight assignments. For this assignment, the weights are quantized using two scaling factors which are learned during training phase.
%
PACT~\cite{Choi2018PACTPC} proposes a technique for quantizing activations using an activation clipping parameter which is optimized during training.
%PACT \cite{Choi2018PACTPC}   introduces a quantization scheme for activations, where the variable $\alpha$ is the clipping level and is determined through a gradient descent based method.
%
There have also been a lot of efforts~\cite{Rastegari2016XNORNetIC,Li2016TernaryWN,Hubara2017QNN} to binarize neural networks at the cost of some accuracy loss.
%Extensive work, \cite{Hubara2017QNN, Rastegari2016XNORNetIC, Li2016TernaryWN} focuses on binarized neural networks, which impose accuracy loss but reduce the bitwidth to lowest possible level.
%
%In BinaryNet \cite{NIPS2016BNN}, an extreme case, a method is proposed for training binarized neural networks which reduce memory size, accesses and computation intensity at the cost of accuracy.
%
%XNOR-Net \cite{Rastegari2016XNORNetIC} leverages binary operations (such as XNOR) to approximate convolution in binarized neural networks. 
%
%BinaryNet and XNOR-Net alleviate most of the multiplications needed in the forward and backward passes of the network and thereby achieve high performance in comparison to full-precision networks. 
%
%Another work \cite{Li2016TernaryWN} introduces ternary-weight networks, in which the weights are quantized to {-1, 0, +1} values by minimizing the Euclidian distance between full-precision weights and their ternary assigned values.
%

However, these inspiring efforts do not introduce sectioning nor they leverage knowledge distillation in the context of either quantization or binarizing the neural networks.
%
%Furthermore, \dcq generalizes the concept of knowledge distillation as it enables leveraging arbitrary number of intermediate layers relying on the inherent hierarchical learning characteristic of deep neural networks.
%
%As such, distillation learning and hint learning fall as special cases of the proposed \textit{divide and conquer strategy}.

\section{Conclusion}
\label{sec:conclusion}

Quantization offers a promising path forward to reduce the compute complexity and memory footprint of deep neural networks. 
This paper sets out to tackle the main challenge in quantization, recovering as much accuracy as possible.
To that end, we developed a sectional multi-backpropagation algorithm that leverages multiple instances of knowledge distillation and intermediate feature representations to teach a quantized student through divide and conquer.
This algorithm, \dcq, achieves significantly higher accuracy compared to the state-of-the-art quantization methods by exploring a new sectional approach towards knowledge distillation.

\bibliography{reference/paper}

\begin{thebibliography}{34}
\providecommand{\natexlab}[1]{#1}
\providecommand{\url}[1]{\texttt{#1}}
\expandafter\ifx\csname urlstyle\endcsname\relax
  \providecommand{\doi}[1]{doi: #1}\else
  \providecommand{\doi}{doi: \begingroup \urlstyle{rm}\Url}\fi

\bibitem[Bartlett et~al.(2017)Bartlett, Foster, and
  Telgarsky]{bartlett2017spectrally}
Bartlett, P.~L., Foster, D.~J., and Telgarsky, M.~J.
\newblock Spectrally-normalized margin bounds for neural networks.
\newblock In \emph{Advances in Neural Information Processing Systems}, pp.\
  6240--6249, 2017.

\bibitem[Bengio et~al.(2013)Bengio, L{\'{e}}onard, and
  Courville]{DBLP:journals/corr/BengioLC13}
Bengio, Y., L{\'{e}}onard, N., and Courville, A.~C.
\newblock Estimating or propagating gradients through stochastic neurons for
  conditional computation.
\newblock \emph{CoRR}, abs/1308.3432, 2013.
\newblock URL \url{http://arxiv.org/abs/1308.3432}.

\bibitem[Bucila et~al.(2006)Bucila, Caruana, and
  Niculescu{-}Mizil]{DBLP:conf/kdd/BucilaCN06}
Bucila, C., Caruana, R., and Niculescu{-}Mizil, A.
\newblock Model compression.
\newblock In Eliassi{-}Rad, T., Ungar, L.~H., Craven, M., and Gunopulos, D.
  (eds.), \emph{Proceedings of the Twelfth {ACM} {SIGKDD} International
  Conference on Knowledge Discovery and Data Mining, Philadelphia, PA, USA,
  August 20-23, 2006}, pp.\  535--541. {ACM}, 2006.
\newblock ISBN 1-59593-339-5.
\newblock \doi{10.1145/1150402.1150464}.
\newblock URL \url{https://doi.org/10.1145/1150402.1150464}.

\bibitem[Choi et~al.(2018)Choi, Wang, Venkataramani, Chuang, Srinivasan, and
  Gopalakrishnan]{Choi2018PACTPC}
Choi, J., Wang, Z., Venkataramani, S., Chuang, P. I.-J., Srinivasan, V., and
  Gopalakrishnan, K.
\newblock Pact: Parameterized clipping activation for quantized neural
  networks.
\newblock \emph{CoRR}, abs/1805.06085, 2018.

\bibitem[Cisse et~al.(2017)Cisse, Bojanowski, Grave, Dauphin, and
  Usunier]{cisse2017parseval}
Cisse, M., Bojanowski, P., Grave, E., Dauphin, Y., and Usunier, N.
\newblock Parseval networks: Improving robustness to adversarial examples.
\newblock In \emph{Proceedings of the 34th International Conference on Machine
  Learning-Volume 70}, pp.\  854--863. JMLR. org, 2017.

\bibitem[Courbariaux et~al.(2015)Courbariaux, Bengio, and
  David]{DBLP:conf/nips/CourbariauxBD15}
Courbariaux, M., Bengio, Y., and David, J.
\newblock Binaryconnect: Training deep neural networks with binary weights
  during propagations.
\newblock In Cortes, C., Lawrence, N.~D., Lee, D.~D., Sugiyama, M., and
  Garnett, R. (eds.), \emph{Advances in Neural Information Processing Systems
  28: Annual Conference on Neural Information Processing Systems 2015, December
  7-12, 2015, Montreal, Quebec, Canada}, pp.\  3123--3131, 2015.

\bibitem[Gong et~al.(2019)Gong, Liu, Jiang, Li, Hu, Lin, Yu, and
  Yan]{DBLP:journals/corr/abs-1908-05033}
Gong, R., Liu, X., Jiang, S., Li, T., Hu, P., Lin, J., Yu, F., and Yan, J.
\newblock Differentiable soft quantization: Bridging full-precision and low-bit
  neural networks.
\newblock \emph{CoRR}, abs/1908.05033, 2019.
\newblock URL \url{http://arxiv.org/abs/1908.05033}.

\bibitem[Gouk et~al.(2018)Gouk, Frank, Pfahringer, and
  Cree]{gouk2018regularisation}
Gouk, H., Frank, E., Pfahringer, B., and Cree, M.
\newblock Regularisation of neural networks by enforcing lipschitz continuity.
\newblock \emph{arXiv preprint arXiv:1804.04368}, 2018.

\bibitem[Gupta et~al.(2015)Gupta, Agrawal, Gopalakrishnan, and
  Narayanan]{DBLP:conf/icml/GuptaAGN15}
Gupta, S., Agrawal, A., Gopalakrishnan, K., and Narayanan, P.
\newblock Deep learning with limited numerical precision.
\newblock In Bach, F.~R. and Blei, D.~M. (eds.), \emph{Proceedings of the 32nd
  International Conference on Machine Learning, {ICML} 2015, Lille, France,
  6-11 July 2015}, volume~37 of \emph{{JMLR} Workshop and Conference
  Proceedings}, pp.\  1737--1746. JMLR.org, 2015.
\newblock URL \url{http://jmlr.org/proceedings/papers/v37/gupta15.html}.

\bibitem[Hauswald et~al.(2015)Hauswald, Laurenzano, Zhang, Li, Rovinski,
  Khurana, Dreslinski, Mudge, Petrucci, Tang, and Mars]{sirius}
Hauswald, J., Laurenzano, M., Zhang, Y., Li, C., Rovinski, A., Khurana, A.,
  Dreslinski, R.~G., Mudge, T.~N., Petrucci, V., Tang, L., and Mars, J.
\newblock Sirius: An open end-to-end voice and vision personal assistant and
  its implications for future warehouse scale computers.
\newblock In \emph{ASPLOS}, 2015.

\bibitem[Hinton et~al.(2015)Hinton, Vinyals, and
  Dean]{DBLP:journals/corr/HintonVD15}
Hinton, G.~E., Vinyals, O., and Dean, J.
\newblock Distilling the knowledge in a neural network.
\newblock \emph{CoRR}, abs/1503.02531, 2015.
\newblock URL \url{http://arxiv.org/abs/1503.02531}.

\bibitem[Hubara et~al.(2017{\natexlab{a}})Hubara, Courbariaux, Soudry,
  El{-}Yaniv, and Bengio]{DBLP:journals/jmlr/HubaraCSEB17}
Hubara, I., Courbariaux, M., Soudry, D., El{-}Yaniv, R., and Bengio, Y.
\newblock Quantized neural networks: Training neural networks with low
  precision weights and activations.
\newblock \emph{Journal of Machine Learning Research}, 18:\penalty0
  187:1--187:30, 2017{\natexlab{a}}.
\newblock URL \url{http://jmlr.org/papers/v18/16-456.html}.

\bibitem[Hubara et~al.(2017{\natexlab{b}})Hubara, Courbariaux, Soudry,
  El-Yaniv, and Bengio]{Hubara2017QNN}
Hubara, I., Courbariaux, M., Soudry, D., El-Yaniv, R., and Bengio, Y.
\newblock {Quantized Neural Networks: Training Neural Networks with Low
  Precision Weights and Activations}.
\newblock \emph{J. Mach. Learn. Res.}, 2017{\natexlab{b}}.

\bibitem[Jaderberg et~al.(2017)Jaderberg, Czarnecki, Osindero, Vinyals, Graves,
  Silver, and Kavukcuoglu]{DBLP:conf/icml/JaderbergCOVGSK17}
Jaderberg, M., Czarnecki, W.~M., Osindero, S., Vinyals, O., Graves, A., Silver,
  D., and Kavukcuoglu, K.
\newblock Decoupled neural interfaces using synthetic gradients.
\newblock In Precup, D. and Teh, Y.~W. (eds.), \emph{Proceedings of the 34th
  International Conference on Machine Learning, {ICML} 2017, Sydney, NSW,
  Australia, 6-11 August 2017}, volume~70 of \emph{Proceedings of Machine
  Learning Research}, pp.\  1627--1635. {PMLR}, 2017.
\newblock URL \url{http://proceedings.mlr.press/v70/jaderberg17a.html}.

\bibitem[Krizhevsky et~al.(2012)Krizhevsky, Sutskever, and
  Hinton]{DBLP:conf/nips/KrizhevskySH12}
Krizhevsky, A., Sutskever, I., and Hinton, G.~E.
\newblock Imagenet classification with deep convolutional neural networks.
\newblock In Bartlett, P.~L., Pereira, F. C.~N., Burges, C. J.~C., Bottou, L.,
  and Weinberger, K.~Q. (eds.), \emph{Advances in Neural Information Processing
  Systems 25: 26th Annual Conference on Neural Information Processing Systems
  2012. Proceedings of a meeting held December 3-6, 2012, Lake Tahoe, Nevada,
  United States.}, pp.\  1106--1114, 2012.

\bibitem[LeCun et~al.(1989)LeCun, Boser, Denker, Henderson, Howard, Hubbard,
  and Jackel]{bpzip}
LeCun, Y., Boser, B.~E., Denker, J.~S., Henderson, D., Howard, R.~E., Hubbard,
  W.~E., and Jackel, L.~D.
\newblock Backpropagation applied to handwritten zip code recognition.
\newblock \emph{Neural Computation}, 1:\penalty0 541--551, 1989.

\bibitem[LeCun et~al.(2015)LeCun, Bengio, and
  Hinton]{DBLP:journals/nature/LeCunBH15}
LeCun, Y., Bengio, Y., and Hinton, G.~E.
\newblock Deep learning.
\newblock \emph{Nature}, 521\penalty0 (7553):\penalty0 436--444, 2015.
\newblock \doi{10.1038/nature14539}.
\newblock URL \url{https://doi.org/10.1038/nature14539}.

\bibitem[Li \& Liu(2016)Li and Liu]{Li2016TernaryWN}
Li, F. and Liu, B.
\newblock {Ternary Weight Networks}.
\newblock \emph{CoRR}, abs/1605.04711, 2016.

\bibitem[Mishra \& Marr(2018)Mishra and Marr]{apprentice:2017}
Mishra, A. and Marr, D.
\newblock {Apprentice: Using Knowledge Distillation Techniques To Improve
  Low-Precision Network Accuracy}.
\newblock In \emph{International Conference on Learning Representations}, 2018.

\bibitem[Mishra et~al.(2018)Mishra, Nurvitadhi, Cook, and
  Marr]{Mishra2017WRPNWR}
Mishra, A.~K., Nurvitadhi, E., Cook, J.~J., and Marr, D.
\newblock {WRPN: Wide Reduced-Precision Networks}.
\newblock In \emph{ICLR}, 2018.

\bibitem[Polino et~al.(2018)Polino, Pascanu, and
  Alistarh]{DBLP:conf/iclr/PolinoPA18}
Polino, A., Pascanu, R., and Alistarh, D.
\newblock Model compression via distillation and quantization.
\newblock In \emph{6th International Conference on Learning Representations,
  {ICLR} 2018, Vancouver, BC, Canada, April 30 - May 3, 2018, Conference Track
  Proceedings}. OpenReview.net, 2018.
\newblock URL \url{https://openreview.net/forum?id=S1XolQbRW}.

\bibitem[Rastegari et~al.(2016{\natexlab{a}})Rastegari, Ordonez, Redmon, and
  Farhadi]{Rastegari2016XNORNetIC}
Rastegari, M., Ordonez, V., Redmon, J., and Farhadi, A.
\newblock {XNOR-Net: ImageNet Classification Using Binary Convolutional Neural
  Networks}.
\newblock In \emph{ECCV}, 2016{\natexlab{a}}.

\bibitem[Rastegari et~al.(2016{\natexlab{b}})Rastegari, Ordonez, Redmon, and
  Farhadi]{xnornet:eccv:2016}
Rastegari, M., Ordonez, V., Redmon, J., and Farhadi, A.
\newblock {XNOR-Net: ImageNet Classification Using Binary Convolutional Neural
  Networks}.
\newblock In \emph{European Conference on Computer Vision}, pp.\  525--542,
  2016{\natexlab{b}}.

\bibitem[Romero et~al.(2015)Romero, Ballas, Kahou, Chassang, Gatta, and
  Bengio]{DBLP:journals/corr/RomeroBKCGB14}
Romero, A., Ballas, N., Kahou, S.~E., Chassang, A., Gatta, C., and Bengio, Y.
\newblock Fitnets: Hints for thin deep nets.
\newblock In Bengio, Y. and LeCun, Y. (eds.), \emph{3rd International
  Conference on Learning Representations, {ICLR} 2015, San Diego, CA, USA, May
  7-9, 2015, Conference Track Proceedings}, 2015.
\newblock URL \url{http://arxiv.org/abs/1412.6550}.

\bibitem[Szegedy et~al.(2013)Szegedy, Zaremba, Sutskever, Bruna, Erhan,
  Goodfellow, and Fergus]{szegedy2013intriguing}
Szegedy, C., Zaremba, W., Sutskever, I., Bruna, J., Erhan, D., Goodfellow, I.,
  and Fergus, R.
\newblock Intriguing properties of neural networks.
\newblock \emph{arXiv preprint arXiv:1312.6199}, 2013.

\bibitem[Virmaux \& Scaman(2018)Virmaux and Scaman]{virmaux2018lipschitz}
Virmaux, A. and Scaman, K.
\newblock Lipschitz regularity of deep neural networks: analysis and efficient
  estimation.
\newblock In \emph{Advances in Neural Information Processing Systems}, pp.\
  3835--3844, 2018.

\bibitem[Wang et~al.(2019)Wang, Bao, Sun, Zhu, Cao, and
  Yu]{DBLP:conf/aaai/WangBSZCY19}
Wang, J., Bao, W., Sun, L., Zhu, X., Cao, B., and Yu, P.~S.
\newblock Private model compression via knowledge distillation.
\newblock In \emph{The Thirty-Third {AAAI} Conference on Artificial
  Intelligence, {AAAI} 2019, The Thirty-First Innovative Applications of
  Artificial Intelligence Conference, {IAAI} 2019, The Ninth {AAAI} Symposium
  on Educational Advances in Artificial Intelligence, {EAAI} 2019, Honolulu,
  Hawaii, USA, January 27 - February 1, 2019.}, pp.\  1190--1197. {AAAI} Press,
  2019.
\newblock ISBN 978-1-57735-809-1.
\newblock URL \url{https://aaai.org/ojs/index.php/AAAI/article/view/3913}.

\bibitem[Yim et~al.(2017)Yim, Joo, Bae, and Kim]{DBLP:conf/cvpr/YimJBK17}
Yim, J., Joo, D., Bae, J., and Kim, J.
\newblock A gift from knowledge distillation: Fast optimization, network
  minimization and transfer learning.
\newblock In \emph{2017 {IEEE} Conference on Computer Vision and Pattern
  Recognition, {CVPR} 2017, Honolulu, HI, USA, July 21-26, 2017}, pp.\
  7130--7138. {IEEE} Computer Society, 2017.
\newblock ISBN 978-1-5386-0457-1.
\newblock \doi{10.1109/CVPR.2017.754}.
\newblock URL \url{https://doi.org/10.1109/CVPR.2017.754}.

\bibitem[Zhang et~al.(2018)Zhang, Yang, Ye, and Hua]{DBLP:conf/eccv/ZhangYYH18}
Zhang, D., Yang, J., Ye, D., and Hua, G.
\newblock Lq-nets: Learned quantization for highly accurate and compact deep
  neural networks.
\newblock In Ferrari, V., Hebert, M., Sminchisescu, C., and Weiss, Y. (eds.),
  \emph{Computer Vision - {ECCV} 2018 - 15th European Conference, Munich,
  Germany, September 8-14, 2018, Proceedings, Part {VIII}}, volume 11212 of
  \emph{Lecture Notes in Computer Science}, pp.\  373--390. Springer, 2018.
\newblock ISBN 978-3-030-01236-6.
\newblock \doi{10.1007/978-3-030-01237-3\_23}.
\newblock URL \url{https://doi.org/10.1007/978-3-030-01237-3\_23}.

\bibitem[Zhou et~al.(2017)Zhou, Yao, Guo, Xu, and
  Chen]{DBLP:conf/iclr/ZhouYGXC17}
Zhou, A., Yao, A., Guo, Y., Xu, L., and Chen, Y.
\newblock Incremental network quantization: Towards lossless cnns with
  low-precision weights.
\newblock In \emph{5th International Conference on Learning Representations,
  {ICLR} 2017, Toulon, France, April 24-26, 2017, Conference Track
  Proceedings}. OpenReview.net, 2017.
\newblock URL \url{https://openreview.net/forum?id=HyQJ-mclg}.

\bibitem[Zhou et~al.(2016)Zhou, Ni, Zhou, Wen, Wu, and
  Zou]{Zhou2016DoReFaNetTL}
Zhou, S., Ni, Z., Zhou, X., Wen, H., Wu, Y., and Zou, Y.
\newblock Dorefa-net: Training low bitwidth convolutional neural networks with
  low bitwidth gradients.
\newblock \emph{CoRR}, abs/1606.06160, 2016.
\newblock URL \url{http://arxiv.org/abs/1606.06160}.

\bibitem[Zhu et~al.(2017)Zhu, Han, Mao, and Dally]{Zhu2016TrainedTQ}
Zhu, C., Han, S., Mao, H., and Dally, W.~J.
\newblock {Trained Ternary Quantization}.
\newblock In \emph{ICLR}, 2017.

\bibitem[Zmora et~al.(2018)Zmora, Jacob, and Novik]{neta_zmora_2018_1297430}
Zmora, N., Jacob, G., and Novik, G.
\newblock Neural network distiller, June 2018.
\newblock URL \url{https://doi.org/10.5281/zenodo.1297430}.

\bibitem[Zou et~al.(2019)Zou, Balan, and Singh]{zou2019lipschitz}
Zou, D., Balan, R., and Singh, M.
\newblock On lipschitz bounds of general convolutional neural networks.
\newblock \emph{IEEE Transactions on Information Theory}, 2019.

\end{thebibliography}
\bibliographystyle{icml2020}

%%%%%%%%%%%%%%%%%%%%%%%%%%%%%%%%%%%%%%%%%%%%%%%%%%%%%%%%%%%%%%%%%%%%%%%%%%%%%%%
%%%%%%%%%%%%%%%%%%%%%%%%%%%%%%%%%%%%%%%%%%%%%%%%%%%%%%%%%%%%%%%%%%%%%%%%%%%%%%%
% DELETE THIS PART. DO NOT PLACE CONTENT AFTER THE REFERENCES!
%%%%%%%%%%%%%%%%%%%%%%%%%%%%%%%%%%%%%%%%%%%%%%%%%%%%%%%%%%%%%%%%%%%%%%%%%%%%%%%
%%%%%%%%%%%%%%%%%%%%%%%%%%%%%%%%%%%%%%%%%%%%%%%%%%%%%%%%%%%%%%%%%%%%%%%%%%%%%%%

\clearpage

\appendix
%\input{body/appendix}

%%%%%%%%%%%%%%%%%%%%%%%%%%%%%%%%%%%%%%%%%%%%%%%%%%%%%%%%%%%%%%%%%%%%%%%%%%%%%%%
%%%%%%%%%%%%%%%%%%%%%%%%%%%%%%%%%%%%%%%%%%%%%%%%%%%%%%%%%%%%%%%%%%%%%%%%%%%%%%%

\end{document}